\documentclass[twoside,12pt]{article}
\usepackage{algorithm}
\usepackage{algorithmic}
\usepackage{amsmath}
\usepackage{amssymb}
\usepackage{amsthm}
\usepackage{enumitem}
\usepackage{graphicx}
\usepackage{xcolor}
\usepackage{booktabs}
\usepackage{blindtext}

%

%
%
%

\usepackage{jmlr2e}


\definecolor{citecol}{HTML}{6F130C}
\definecolor{tableofcontent}{HTML}{1F4A83}
\definecolor{urlcol}{HTML}{2470D8}

\usepackage{hyperref}
\hypersetup{
    colorlinks=true,       
    linkcolor=tableofcontent,
    citecolor=citecol,        
    urlcolor=blue,           
}

\usepackage{cleveref}

\usepackage{lastpage}


\firstpageno{1}

\begin{document}

\title{How Particle-System Random Batch Methods Enhance Graph Transformer: Memory Efficiency and Parallel Computing Strategy}

\author{\name Hanwen Liu
\email \texttt{\rm jiaoshuangxi@sjtu.edu.cn}\\
       \addr Shanghai Jiao Tong University
    \AND
\name  Yixuan Ma
\email \texttt{\rm mayx5901@sjtu.edu.cn}\\
       \addr Shanghai Jiao Tong University
    \AND
\name Shi Jin
\email \texttt{\rm shijin-m@sjtu.edu.cn}\\
       \addr 
       Shanghai Jiao Tong University
    \AND
\name Yu Guang Wang\thanks{Corresponding author} 
\email \texttt{\rm yuguang.wang@sjtu.edu.cn}\\
       \addr 
       Shanghai Jiao Tong University}

\editor{My editor}

\maketitle

\begin{abstract}
Attention mechanism is a significant part of Transformer models. It helps extract features from embedded vectors by adding global information and its expressivity has been proved to be powerful. Nevertheless, the quadratic complexity restricts its practicability. Although several researches have provided attention mechanism in sparse form, they are lack of theoretical analysis about the expressivity of their mechanism while reducing complexity. In this paper, we put forward Random Batch Attention (RBA), a linear self-attention mechanism, which has theoretical support of the ability to maintain its expressivity.
Random Batch Attention has several significant strengths as follows: 
(1) Random Batch Attention has linear time complexity. Other than this, it can be implemented in parallel on a new dimension, which contributes to much memory saving.
(2) Random Batch Attention mechanism can improve most of the existing models by replacing their attention mechanisms, even many previously improved attention mechanisms.
(3) Random Batch Attention mechanism has theoretical explanation in convergence, as it comes from Random Batch Methods on computation mathematics.
Experiments on large graphs have proved advantages mentioned above.
Also, the theoretical modeling of self-attention mechanism is a new tool for future research on attention-mechanism analysis.
\end{abstract}

\begin{keywords}
  Attention mechanism, Random Batch Methods, Graph transformer 
\end{keywords}

\section{Introduction}


Over the past decade, Transformers ~\citep{vaswani2017attention} have demonstrated groundbreaking performance across diverse domains, most notably in natural language processing, as well as in computer vision \cite{dosovitskiy2021imageworth16x16words} and graph signal processing \cite{min2022Transformer}.
Especially, in recent several years, large language models comes into being and achieved unprecedented success on many fields.
For example, OpenAI's GPT-4 \cite{openai2024gpt4technicalreport} emerged as the first widely-accessible multi-modal LLM, demonstrating superior reasoning capabilities and setting new benchmarks in tasks ranging from complex language understanding to coding challenges. Simultaneously, specialized applications like MEDIC \cite{0Large} showcased LLMs' transformative potential in healthcare by significantly reducing medication errors in pharmacy settings, and Imagen \cite{saharia2022photorealistictexttoimagediffusionmodels} achieves photorealistic text-to-image generation by cascading diffusion models with a frozen T5-XXL language encoder, enabling fine-grained semantic alignment between text and pixels. These developments collectively highlight the rapid evolution of LLMs toward greater multi-modal capability, specialized domain expertise, and computational efficiency.

Attention mechanism is a crutial part of Transformers. Transformers completely abandon traditional Recurrent Neural Networks (RNNs) and Convolutional Neural Networks (CNNs), relying instead on the self-attention mechanism to capture dependencies between different positions in the input sequence. The self-attention mechanism calculates the relevance of each element in the input sequence to every other element, generating a weight matrix used to compute a weighted sum, resulting in a contextual representation for each element. This allows the Transformer to process the entire sequence in parallel while capturing long-range dependencies, significantly improving both efficiency and performance.

Specifically, the multi-head attention mechanism in Transformers further extends the capabilities of self-attention. By projecting the input into multiple subspaces and computing attention separately, the model can capture complex patterns in the sequence from different perspectives. This design not only enhances the model's expressive power but also makes it more flexible in handling diverse tasks.

The sparse design of attention mechanism is a popular research in recent several years. In 2019, \cite{child2019generatinglongsequencessparse} put forward the Sparse Transformer, change the traditional attention mechanism with sparse factorizations of the attention matrix, reduce the time complexity to $O(N^{\frac{3}{2}})$. In 2020, \cite{kitaev2020reformerefficientTransformer} utilize Hash function to map query and key to sparsify the two matrix, reduce the time complexity to $O(N\mathrm{log}N)$. In 2020, \cite{NEURIPS2020_c8512d14} introduced a sparse attention mechanism that combines global, local, and random attention patterns, significantly reducing computational complexity to $O(N)$ while maintaining the ability to capture both long-range and local dependencies in long sequences. In 2023, \cite{ding2023longnetscalingTransformers1000000000} replace the query, key and value with their sparsified segments, further reduce the time complexity to $O(N)$. All the researches proved the expressivity of their module through experiments on different datasets in the field of natural language processing.


In the field of graph signal processing, the most popular artificial intelligence tool is graph neural networks, most of which are based on graph convolution networks.
\cite{kipf2017semisupervisedclassificationgraphconvolutional}. However, due to the over-smoothing phenomenon, graph convolution networks can not pile with many layers. Other than this, the adjacent factor hinder neural networks from extract the global information.

As illustration, \cite{yun2019graph,zhao2021gophormer, wu2023difformer, wu2022nodeformer, wu2024simplifying, deng2024polynormer, zhang2020graph, wu2021representing, ying2021Transformers, rampavsek2022recipe, ma2023graph} take graph data inputs as prompts and utilize Transformer as an encoder.
Many of these architectures add graph structure information into the construction of graph transformers. One prevalent methodology is to design a structure encoding (SE) or position encoding (PE) and send it to the attention calculation module as a bias \cite{ying2021Transformers, luo2022one, rampavsek2022recipe}, which has been proved to be an excellent idea similar as PE in natural language processing. 

Despite the good expressivity of graph transformers, the practicability is always restrained by the quadratic time complexity of the attention mechanism and the non-parallelism of large graph data without batches but difficult to gain representation in a short time and with little memory. To lower the time complexity, several works have tried to sparse the attention mechanism or simplify the calculation steps. For example, \citet{rampavsek2022recipe} directly make use of the linear Transformers, Performer \cite{choromanski2020rethinking} and BigBird \cite{zaheer2020big} models to construct its architecture. \cite{wu2024simplifying} and \cite{deng2024polynormer} change the order in attention calculation steps, multiply Key and Value in former and multiple the result with Query in latter to reduce the complexity.
 
Nonetheless, most of these scalable graph transformers have no detailed mathematical analysis on the expressivity, which is a powerful support along with the experiment results. Other than this, in large graph downstream tasks, they are only practiced in single device, affecting the efficiency of their practical applications.


On the promise of the same shape of inputs and outputs of a neural network module, the input vector (with a 2-dimension shape) can be viewed as a group of particles in high-dimension space, and the propagation of this module can be viewed as movement of the group. As for global attention mechanism, the attention coefficients effect on the outputs, thus can be viewed as the interaction force between all pairs of particles, and help form the particle group into a system.
Therefore, algorithms on particle systems may be able to transferred to attention mechanism design and have a good influence on the graph transformer models with more theoretical explanation.


In this paper, we make contributions to both the theoretical research and efficient models of Transformer field. 
Specifically, 
\begin{itemize}
\item  We propose a theoretical model to characterize the propagation dynamics of the self-attention mechanism, offering a potential new tool for future research on Transformer theory and lay the foundation for random batch attention. 
\item We introduce Random Batch Attention (RBA), a novel self-attention mechanism derived from the Random Batch Methods, which reduces the complexity of standard self-attention while providing a mathematical proof to guarantee its expressivity. 
\item We apply RBA to GNNs to enhance memory efficiency, and experiments on large-scale graph datasets confirm that RBA does not compromise accuracy.
\end{itemize}

In Section 2, we present the necessary background knowledge, particularly the mathematical foundations of our research. Section 3 introduces our theoretical model of self-attention propagation. Section 4, as the core of this paper, details the proposed RBA algorithm, followed in Section 5 by a mathematical proof of its expressivity. Section 6 reports our experimental results, and Section 7 concludes the paper. Additional details and discussions of future work are provided in the Appendix.

\section{Preliminaries}

In this section, we claim all the definitions and symbols, and introduce necessary formulations and theorems used in following parts.

\subsection{Self-attention mechanism in Transformer}

Throughout the article, we regard the tokens with length $N$ and embedding dimension $d$ as $N$ particles in $\mathbb{R}^d$, and use ordinate $\left \{ X^i \right \} _{i=1}^N$ to represent it. Here, to be consistent with \cite{vaswani2017attention}, we represent the tokens as a row vector and denote $X=\left ( {X^1}^T {X^2}^T \cdots {X^N}^T \right )^T \in \mathbb{R}^{N \times d}$ as the token in matrix form.\footnote{In the Section \labelcref{convergence}, for convenience of derivation, we transpose tokens back into column vectors. }

To describe a dense neural network layer, we denote $W \in \mathbb{R}^{d \times d'}$ as the weight matrix and $\sigma(\cdot)$ as the activation function. So if  $X$ is taken to be the input, the corresponding output is $X'=\sigma(XW) \in \mathbb{R}^{d \times d'}$. 

As for Transformer encoder layer, to be detailed, 
the (multi-head) self-attention module can be represented as the equation
\begin{align*}
    & A^h(X)={\rm Softmax}\left(\dfrac{XW_Q^h(XW_K^h)^T}{\sqrt{d}}\right)XW_V^h, \\
    & X'=\sum_{h=1}^H A^h(X)W_O^h,
\end{align*}
where $X \in \mathbb{R}^{N \times d}$, weight matrices $W_Q^h, W_K^h, W_V^h \in \mathbb{R}^{d \times d'}$, $W_O^h \in \mathbb{R}^{d' \times d}$, $H$ is the number of heads, $d'$ is the dimension of each head, and

\begin{equation*}
\operatorname{Softmax}\bigl(Y_{N \times d}\bigr)=\left(\frac{e^{Y_{i j}}}{\sum_{k=1}^{d} e^{Y_{i k}}}\right)_{N \times d}.
\end{equation*}
For the convenience of theoretical research, without loss of generality, we set $H=1$, so the self-attention module can be written as 
\begin{equation}\label{eq3}
    X'={\rm Softmax}\left(\dfrac{XW_Q(XW_K)^T}{\sqrt{d}}\right)XW_VW_O,
\end{equation}
where $X \in \mathbb{R}^{N \times d}$, weight matrices $W_Q, W_K, W_V \in \mathbb{R}^{d \times d'}$, $W_O \in \mathbb{R}^{d' \times d}$. So the output $X'$ has the same shape with $X$. 
After the self-attention calculation, the output $X'$ added with the input $X$ is propagated to the feedforward layer. The self-attention calculation, skip connection and feedforward operation together construct the self-attention mechanism, which is one of the most important part of Transformer.


\subsection{Stochastic differential equations}

Stochastic differential equations (SDEs) describe dynamical systems influenced by random noise. Consider an SDE of the form:

\begin{equation} \label{itoSDE}
\mathrm{d}X_t = \mu_t(X_t)\mathrm{d}t + \sigma_t(X_t)\mathrm{d}W_t,
\end{equation}
where $X_t \in \mathbb{R}^n$ is the state process, $\mu: [0,T]\times\mathbb{R}^n \to \mathbb{R}^n$ is the drift term, $\sigma: [0,T]\times\mathbb{R}^n \to \mathbb{R}^{n\times m}$ is the diffusion term, and $W_t$ is an $m$-dimensional Wiener process. Under standard Lipschitz and linear growth conditions on $\mu$ and $\sigma$, this SDE admits a unique strong solution.

\subsubsection{Itô's formula.}

The Itô's formula plays a fundamental role in stochastic calculus.
In classical calculus, for a quadratic continuous differentiable function and a differentiable curve, we have a chain rule:
\begin{equation*}
    \mathrm{d}f(x(t))=f'(x(t))\mathrm{d}x(t).
\end{equation*}
However, in random analysis, when $x(t)$ is replaced by a Wiener provess $W_t$ or, more generally, an It process $X_t$, the classical chain rule fails. This is because Brownian motion paths, while continuous, are nowhere differentiable and have infinite variation. Kiyosi Itô's key contribution was identifying the essential difference and formulating Itô's formula, widely regarded as the cornerstone of stochastic calculus. 

We first consider \eqref{itoSDE} in the one-dimensional situation. Let $f(x,t)$ be a sufficiently smooth real-valued function.  Itô's formula gives the result
\begin{equation*}\label{ito1}
   \mathrm{d} f\left(t, X_{t}\right)=\left[\frac{\partial f}{\partial t}\left(t, X_{t}\right)+\mu_{t} \frac{\partial f}{\partial x}\left(t, X_{t}\right)+\frac{1}{2} \sigma_{t}^{2} \frac{\partial^{2} f}{\partial x^{2}}\left(t, X_{t}\right)\right] \mathrm{d} t+\sigma_{t} \frac{\partial f}{\partial x}\left(t, X_{t}\right) \mathrm{d} W_{t}.
\end{equation*}

And as a special case, if $f$ does not explicitly depend on $t$, the formula simplifies to: 
\begin{equation*} \label{ito2}
    \mathrm{d}f(X_t) = \left[ \mu_t f'(X_t) + \frac{1}{2} \sigma_t^2 f''(X_t) \right] \mathrm{d}t + \sigma_t f'(X_t) \mathrm{d}W_t. 
\end{equation*}

Compared to the classical chain rule, the formula includes an extra term, $\frac{1}{2} \sigma_t^2 \frac{\partial^2 f}{\partial x^2} \mathrm{d}t$, known as the \textbf{Itô correction term}. This arises from the non-zero quadratic variation of Brownian motion , that is $(\mathrm{d}W_t)^2 = \mathrm{d}t$.

Itô's formula naturally extends to functions of multiple stochastic variables. Consider \eqref{itoSDE} with dimension-$n$, or equivalently consider $m$ independent Brownian motions $W_t^1, \cdots, W_t^m$, and $n$ Itô processes
\begin{equation*}
    \mathrm{d}X_t^i = \mu_t^i \mathrm{d}t + \sum_{j=1}^m \sigma_t^{ij} \mathrm{d}W_t^j, \quad i = 1, \dots, n,
\end{equation*}

Itô's formula provides
\begin{equation*}\label{ito3}
    \begin{aligned}\mathrm{d} f\left(t, X_{t}\right)= & {\left[\frac{\partial f}{\partial t}\left(t, X_{t}\right)+\sum_{i=1}^{n} \mu_{t}^{i} \frac{\partial f}{\partial x_{i}}\left(t, X_{t}\right)+\frac{1}{2} \sum_{i=1}^{n} \sum_{j=1}^{n}\left(\sigma_t \sigma^{\top}_t \right)^{i j} \frac{\partial^{2} f}{\partial x_{i} \partial x_{j}}\left(t, X_{t}\right)\right] \mathrm{d} t } \\& +\sum_{i=1}^{n} \sum_{j=1}^{m} \frac{\partial f}{\partial x_{i}}\left(t, X_{t}\right) \sigma_{t}^{i j} \mathrm{d} W_{t}^{j}.\end{aligned}
\end{equation*}
or equivalently in vector form
\begin{equation*}
\mathrm{d}f(t,X_t) = \left(\frac{\partial f}{\partial t} + \mu_t\cdot\nabla_x f + \frac{1}{2}\text{tr}[\sigma_t\sigma^\top_t\nabla_x^2 f]\right)\mathrm{d}t + \nabla_x f\cdot\sigma_t \mathrm{d}W_t.
\end{equation*}

\subsubsection{Introduction to other results}
Several key results are essential for analyzing SDEs:

Grönwall's Inequality \citep{gronwall1919note}:
If $u(t) \leq \alpha(t) + \int_0^t \beta(s)u(s)ds$ with $\alpha$ non-decreasing and $\beta \geq 0$, then
\begin{equation}
u(t) \leq \alpha(t)\exp\left(\int_0^t \beta(s)ds\right).
\end{equation}

Fokker-Planck Equation \citep{fokker1914mittlere,planck1917satz}:
The transition density $p(t,x)$ of $X_t$ satisfies:
\begin{equation}
\frac{\partial p}{\partial t} = -\sum_{i=1}^n \frac{\partial (\mu_i p)}{\partial x_i} + \frac{1}{2}\sum_{i,j=1}^n \frac{\partial^2 ([\sigma\sigma^\top]_{ij}p)}{\partial x_i\partial x_j}.
\end{equation}

Burkholder-Davis-Gundy Inequality \citep{davis2011integral}:
For $p > 0$, $\exists C_p$ such that
\begin{equation}
\mathbb{E}\left[\sup_{0\leq s\leq t}\left|\int_0^s \sigma_r\mathrm{d}W_r\right|^p\right] \leq C_p\mathbb{E}\left[\left(\int_0^t \|\sigma_s\|^2ds\right)^{p/2}\right].
\end{equation}

These results provide the foundation for studying existence, uniqueness, and regularity properties of SDE solutions.

\subsection{Interacting particle systems theory}

Particle systems are complex dynamical systems composed of large numbers of interacting individuals, with wide applications in statistical physics, mathematical biology, and engineering sciences. Consider a system of $N$ particles, where the state of each particle is described by its position $X^i=X^i(t)\in \mathbb{R}^d$ and velocity $V^i=V^i(t) \in \mathbb{R}^d$ . 
$(i=1,...,N)$Such systems can typically be modeled by the following coupled system of stochastic differential equations:

\begin{equation}
\begin{cases}
\mathrm{d}X^i = V^i \mathrm{d}t, \\
\mathrm{d}V^i = \left[ -\gamma V^i + \frac{1}{N}\sum_{j=1}^N F(X^i-X^j) \right]\mathrm{d}t + \sigma \mathrm{d}W^i,
\end{cases}
\end{equation}

where $\gamma>0$ is the friction coefficient, $F:\mathbb{R}^d\to\mathbb{R}^d$ represents interparticle interaction forces, $\sigma>0$ is the noise intensity, and $\{W^i\}_{i=1}^N$ are independent d-dimensional standard Brownian motions. This model exhibits the following key features:

1. Mean-field limit: As the number of particles $N\to\infty$, the evolution of the empirical measure $\mu^N = \frac{1}{N}\sum_{i=1}^N \delta_{(X^i,V^i)}$ is described by the following nonlinear Fokker-Planck equation \cite{}:
\begin{equation}
\partial_t f + V\cdot\nabla_{X} f + \nabla_{V}\cdot\left[ (F*f - \gamma V)f \right] = \frac{\sigma^2}{2}\Delta_{V} f.
\end{equation}
where $f(t,X,V)$ is the particle density function in phase space, and $F*f = \int F(X-Y)f(t,Y,V)dYdV$ represents the mean-field interaction.

2. Propagation of chaos: If the initial conditions satisfy the chaoticity property (i.e., $\{(X^i(0),V^i(0))\}$ are independent and identically distributed), then for any finite time $T>0$, the particle states remain asymptotically independent as $N\to\infty$. This property can be rigorously proved using Grönwall's inequality

3. Energy balance: The total system energy $\mathcal{E} = \frac{1}{N}\sum_{i=1}^N |V^i|^2$ satisfies the following stochastic differential relation:
\begin{equation}
d\mathcal{E} = \left[ -2\gamma \mathcal{E}_ + \sigma^2 d \right] \mathrm{d}t + \frac{2\sigma}{N}\sum_{i=1}^N V^i \cdot \mathrm{d}W^i.
\end{equation}

The theoretical framework provides the mathematical foundation for studying emergent phenomena from microscopic stochastic dynamics to macroscopic deterministic behavior, with important applications in active matter, plasma physics, and other fields.

In this paper, we consider a specific situation. To describe an particle system, let $\left \{ X^i \right \}_{i=1}^N$ be the coordinate sequence of $N$ particles $\left \{ i \right \}_{i=1}^N$ with dimension $d$ of the system and $b(\cdot) \in \mathbf{C}(\mathbb{R}^d;\mathbb{R}^d)$ be the external force field. For the interaction of any pairs of particles $i$ and $j$, let $K_{ij} \in \mathbf{C}(\mathbb{R}^d \times \mathbb{R}^d;\mathbb{R}^d)$ be the interacting force with weight $m_{j} \ge 0 ,j=1,2,\cdots,N$. Denote $\left \{ W^i \right \} _{i=1}^N$ as a sequence of given independent $d$ dimensional Wiener processes (the standard Brownian motions),
  we can describe this particle system by the following stochastic differential equation

\begin{equation}\label{system}
    \mathrm{d} X^{i}=b\left(X^{i}\right) \mathrm{d} t+\frac{1}{N-1} \sum_{j: j \neq i} m_{j} K_{i j}\left(X^{i}, X^{j}\right) d t+\sigma \mathrm{d} W^{i} \quad i=1,2, \ldots, N.
\end{equation}
We briefly introduce the random empirical probability measure corresponding to \eqref{system}, by the definition
\begin{equation*}
    \mu^{N}(t):=\frac{1}{N} \sum_{j=1}^{N} \omega_{j} \delta\left(x-X^{j}(t)\right),
\end{equation*}
where 
\begin{equation*}
    \omega_{j}=\frac{N m_{j}}{\sum_{j} m_{j}},
\end{equation*}
and $\delta(x)$ is the Dirac measure that for any set $A$, $\delta(x)(A)=\left\{\begin{matrix}
1 , \ \ if \ x \in A
 \\
0 , \ \ if \ x \notin A.
\end{matrix}\right.$

Writing this measure as $\mu^{N}(t):=\dfrac{\sum_{j=1}^{N}m_j\delta\left(x-X^{j}(t)\right)}{\sum_{j=1}^{N}m_j}$, obviously,
\begin{equation*}
\mu^N(t)(A)=\dfrac{\sum_{j:x-\delta_j(t) \in A}m_j}{\sum_{j=1}^N m_j} 
\end{equation*}

The random empirical probability measure Established a mapping from discrete point sets to the real number field, equivalent to establishing a function about sequences. In \cite{min2022Transformer}, the author has done a research on the measure to measure flow map of the empirical probability measure to describe the Transformer propagation from a new point of view.

For convenience, a single symbol may represent different constant values throughout the paper.

\subsection{Taylor expansion of vector valued functions}

For convenience of the following theoretical study, we introduce an important tool, Taylor expansion of vector valued functions. While, we just consider the case of the variates and function values with the same dimension, because all the vectors are in the fixed vector space.

\subsubsection{For univariate functions}
Recall the Taylor expansion of scalar valued univariate functions.
The Taylor expansion is
\begin{align*}
f(x) &= \sum_{k=0}^{\infty} \frac{f^{(k)}(x_0)}{k!}(x-x_0)^k \\
     &= f(x_0) + f'(x_0)(x-x_0) + \frac{f''(x_0)}{2!}(x-x_0)^2 + \frac{f'''(x_0)}{3!}(x-x_0)^3 + \cdots.
\end{align*}
For many problems, we just need to expand to the second order, as the following
\begin{align*}
f(x) &= f(x_0) + f'(x_0)(x-x_0) + \frac{f''(x_0)}{2!}(x-x_0)^2 + o((x-x_0)^2) \\
     &= f(x_0) + f'(x_0)(x-x_0) + \frac{1}{2}f''(x_0)(x-x_0)^2 + o((x-x_0)^2) \\
     &= f(x_0) + f'(x_0)(x-x_0) + \frac{1}{2}M_2(x-x_0)^2,
\end{align*}
where $M_2$ is a coefficient related to $|f''(x_0)|$.

Similarly, let $\mathbf{x}\in \mathbb{R}^d$ and $f(\mathbf{x})\in \mathbb{R}^d$ also. We perform the expansion for each component of $f(\mathbf{x})$
\begin{align*}
f_i(\mathbf{x}) &= f_i(\mathbf{x}_0) + \nabla f_i(\mathbf{x}_0)^\top (\mathbf{x} - \mathbf{x}_0) + \frac{1}{2} (\mathbf{x} - \mathbf{x}_0)^\top \mathbf{H}_{f_i}(\mathbf{x}_0)(\mathbf{x} - \mathbf{x}_0) + o(\|\mathbf{x} - \mathbf{x}_0\|^2) \\
&= f_i(\mathbf{x}_0) + \sum_{j=1}^n \frac{\partial f_i}{\partial x_j}(\mathbf{x}_0)(x_j - x_{0j}) \\
&\quad + \frac{1}{2} \sum_{j=1}^n \sum_{k=1}^n \frac{\partial^2 f_i}{\partial x_j \partial x_k}(\mathbf{x}_0)(x_j - x_{0j})(x_k - x_{0k}) + o(\|\mathbf{x} - \mathbf{x}_0\|^2).
\end{align*}
Then the Taylor expansion to the second order is 
\begin{equation} \label{taylor}
\mathbf{f}(\mathbf{x}) = \mathbf{f}(\mathbf{x}_0) + \mathbf{J}_f(\mathbf{x}_0)(\mathbf{x} - \mathbf{x}_0) + \frac{1}{2} \begin{bmatrix}
(\mathbf{x} - \mathbf{x}_0)^\top \mathbf{H}_{f_1}(\mathbf{x}_0)(\mathbf{x} - \mathbf{x}_0) \\
(\mathbf{x} - \mathbf{x}_0)^\top \mathbf{H}_{f_2}(\mathbf{x}_0)(\mathbf{x} - \mathbf{x}_0) \\
\vdots \\
(\mathbf{x} - \mathbf{x}_0)^\top \mathbf{H}_{f_n}(\mathbf{x}_0)(\mathbf{x} - \mathbf{x}_0)
\end{bmatrix} + o(\|\mathbf{x} - \mathbf{x}_0\|^2) .
\end{equation}

For the sake of simplicity in form, we utilize the outer product and multiplication on high-dimensional matrices.

\paragraph{Outer product of vectors.}
Let $a \in \mathbb{R}^m,b \in \mathbb{R}^n$, the outer product of $a$ and $b$ is defined as 
\begin{equation*}
    a \otimes b := ab^T=(a_ib_j)_{m \times n}.
\end{equation*}

\paragraph{Multiplication of high-dimensional matrices.}
Let $A \in \mathbb{R}^{m \times n \times l}$ and $B \in \mathbb{R}^{n \times l}$, define the multiplication of $A$ and $B$ as
\begin{equation*}
    AB=\left ( \sum_{j=1}^n \sum_{k=1}^l A_{1jk}B_{jk}        \cdots     \sum_{j=1}^n \sum_{k=1}^l A_{mjk}B_{jk}  \right ) ^T.
\end{equation*}
The result is a m-dimension vector.
As a special example, let $C \in \mathbb{R}^{m \times n \times n}$ and $u,v \in \mathbb{R}^n$, we derive 
\begin{equation*}
    C(u \otimes v)_i=\sum_{j=1}^n\sum_{k=1}^nC_{ijk}u_jv_k=\sum_{k=1}^nv_k(\sum_{j=1}^nC_{ijk}u_j)=u^TC_iv.
\end{equation*}
So
\begin{equation*}
    C(u \otimes v)=(u^T C_1 v \cdots u^TC_mv)^T=u^TCv.
\end{equation*}
Therefore, if we denote $\mathbf{H}_f(\mathbf{x}_0)=(\mathbf{H}_{f_1}(\mathbf{x}_0) \cdots \mathbf{H}_{f_n}(\mathbf{x}_0))^T \in \mathbb{R}^{n\times n\times n}$, the Taylor expansion \eqref{taylor} can be written as
\begin{align*}
    \mathbf{f}(\mathbf{x}) &= \mathbf{f}(\mathbf{x}_0) + \mathbf{J}_f(\mathbf{x}_0)(\mathbf{x} - \mathbf{x}_0) + \frac{1}{2} (\mathbf{x} - \mathbf{x}_0)^T\mathbf{H}_f(\mathbf{x}_0)(\mathbf{x} - \mathbf{x}_0) + o(\|\mathbf{x} - \mathbf{x}_0\|^2) \\
    &=\mathbf{f}(\mathbf{x}_0) + \mathbf{J}_f(\mathbf{x}_0)(\mathbf{x} - \mathbf{x}_0) + \frac{1}{2} \mathbf{H}_f(\mathbf{x}_0)[(\mathbf{x} - \mathbf{x}_0)\otimes (\mathbf{x} - \mathbf{x}_0)] + o(\|\mathbf{x} - \mathbf{x}_0\|^2).
\end{align*}
We claim that there is a bound $M_2$ related to $\left \{ ||\mathbf{H}_{f_i}(\mathbf{x}_0)|| \right \}_{i=1}^n$ , one can obtain
\begin{equation*}
    \mathbf{f}(\mathbf{x}) = \mathbf{f}(\mathbf{x}_0) + \mathbf{J}_f(\mathbf{x}_0)(\mathbf{x} - \mathbf{x}_0) + \frac{1}{2} M_2(\mathbf{x} - \mathbf{x}_0)^T(\mathbf{x} - \mathbf{x}_0) \mathbf{1}_n
\end{equation*}
where $\mathbf{1}_n=(1 \cdots  1)^T \in \mathbb{R}^n$.

\subsubsection{For multivariate functions}

Without loss of generality, we just consider the case of two variates. Let $\mathbf{x},\mathbf{y} \in \mathbb{R}^n$ and $f(\mathbf{x},\mathbf{y})\in \mathbb{R}^d$ also, the Taylor expansion at point $(\mathbf{x}_0,\mathbf{y}_0)$ is
\begin{align*}
    \mathbf{f}(\mathbf{x},\mathbf{y}) = \ &\mathbf{f}(\mathbf{x}_0,\mathbf{y}_0) + \mathbf{J}_\mathbf{x}(\mathbf{x}_0,\mathbf{y}_0)(\mathbf{x} - \mathbf{x}_0) + \mathbf{J}_\mathbf{y}(\mathbf{x}_0,\mathbf{y}_0)(\mathbf{y} - \mathbf{y}_0)  \\ & + \frac{1}{2} \mathbf{H}_{xx}(\mathbf{x}_0,\mathbf{y}_0)[(\mathbf{x} - \mathbf{x}_0)\otimes (\mathbf{x} - \mathbf{x}_0)] + \mathbf{H}_{xy}(\mathbf{x}_0,\mathbf{y}_0)[(\mathbf{x} - \mathbf{x}_0)\otimes (\mathbf{y} - \mathbf{y}_0)] \\ & + \frac{1}{2} \mathbf{H}_{yy}(\mathbf{x}_0,\mathbf{y}_0)[(\mathbf{y} - \mathbf{y}_0)\otimes (\mathbf{y} - \mathbf{y}_0)]+o(\sqrt{||\mathbf{x}-\mathbf{x}_0||^2+||\mathbf{y}-\mathbf{y}_0||^2})\mathbf{1}_n.
\end{align*}
Then, there will be a coefficient related to ${\mathbf{H}_{xx}
(\mathbf{x}_0,\mathbf{y}_0)},\mathbf{H}_{xy}
(\mathbf{x}_0,\mathbf{y}_0),\mathbf{H}_{yy}
(\mathbf{x}_0,\mathbf{y}_0)$, satisfying
\begin{align*}
    \mathbf{f}(\mathbf{x},\mathbf{y}) = \ &\mathbf{f}(\mathbf{x}_0,\mathbf{y}_0) + \mathbf{J}_\mathbf{x}(\mathbf{x}_0,\mathbf{y}_0)(\mathbf{x} - \mathbf{x}_0) + \mathbf{J}_\mathbf{y}(\mathbf{x}_0,\mathbf{y}_0)(\mathbf{y} - \mathbf{y}_0)  \\ & + \frac{1}{2} M_2 \Big( (\mathbf{x} - \mathbf{x}_0)\otimes (\mathbf{x} - \mathbf{x}_0) + 2 (\mathbf{x} - \mathbf{x}_0)\otimes (\mathbf{y} - \mathbf{y}_0) + (\mathbf{y} - \mathbf{y}_0)\otimes (\mathbf{y} - \mathbf{y}_0) \Big) \\ 
    = \ &\mathbf{f}(\mathbf{x}_0,\mathbf{y}_0) + \mathbf{J}_\mathbf{x}(\mathbf{x}_0,\mathbf{y}_0)(\mathbf{x} - \mathbf{x}_0) + \mathbf{J}_\mathbf{y}(\mathbf{x}_0,\mathbf{y}_0)(\mathbf{y} - \mathbf{y}_0)  \\ & + \frac{1}{2} M_2 \Big( (\mathbf{x} - \mathbf{x}_0 + \mathbf{y} - \mathbf{y}_0)\otimes (\mathbf{x} - \mathbf{x}_0+\mathbf{y} - \mathbf{y}_0) \Big) .
\end{align*}

\section{Modeling for Attention and graph transformer by Particle system}
\label{sec:modeling}

For an Transformer encoder layer, the main part is the self-attention module, with the skip connection and layer normalization operation. In the following, we ignore the feed-forward part, just take a Transformer encoder layer as a self-attention layer.

We consider all the weight matrices were trained sufficiently and be constant. Notice that $\sqrt{d}$ is a constant, so we can substitute ${W_Q}$ for $\dfrac{W_Q}{\sqrt{d}}$. \eqref{eq3} can be written as 
\begin{equation}\label{eq4}
    X'={\rm Softmax}\left({XW_Q(XW_K)^T}\right)XW_VW_O.
\end{equation}
Then, we combine these weight matrices. Let $W=W_QW_K \in \mathbb{R}^{d \times d}, \hat{W}=W_VW_O \in \mathbb{R}^{d \times d}$, \eqref{eq4} can be written as 
\begin{equation}\label{eq5}
    X'={\rm Softmax}({XWX^T})X\hat{W}
\end{equation}

Now we derive \eqref{eq5}. Let $X^i \in \mathbb{R}^{1 \times d}$ be the embedded vector of the $i$-th token. So $X=({X^1}^T,\cdots,{X^n}^T)^T$.  
\begin{align*}
    X'
    & = \dfrac{\mathrm{exp}(X^iW{X^k}^T)}{\sum_{j=1}^n \mathrm{exp}(X^iW{X^j}^T)}_{n \times n} X \hat{W} \\
    & = \begin{pmatrix}
\sum_{k=1}^{N} \dfrac{\mathrm{exp}(X^iW{X^k}^T)}{\sum_{j=1}^N \mathrm{exp}(X^iW{X^j}^T)}{X^k}^T
 \\
\vdots 
 \\
\sum_{k=1}^{N} \dfrac{\exp(X^iW{X^k}^T)}{\sum_{j=1}^N \exp(X^iW{X^j}^T)}{X^k}^T
\end{pmatrix} \hat{W} \\
    & = \begin{pmatrix}
\sum_{k=1}^{N} \dfrac{\exp(X^iW{X^k}^T)}{\sum_{j=1}^N \mathrm{exp}(X^1W{X^j}^T)}{X^k}^T\hat{W}
 \\
\vdots 
 \\
\sum_{k=1}^{N} \dfrac{\exp(X^iW{X^k}^T)}{\sum_{j=1}^N \exp(X^nW{X^j}^T)}{X^k}^T\hat{W}
\end{pmatrix}. 
\end{align*}

If we also let ${X'}^i$ be the embedding vector of the $i$-th token, it can be derived from the above equation that
\begin{equation}\label{eq6}
    {X'}^i=\sum_{k=1}^{N} \dfrac{\exp(X^iW{X^k}^T)}{\sum_{j=1}^n \exp(X^iW{X^j}^T)}{X^k}^T\hat{W} .
\end{equation}

Next, we add the skip connection into \eqref{eq6}, that is 
\begin{equation}\label{eq7}
    {X'}^i=X^i + \sum_{k=1}^{N} \dfrac{\exp(X^iW{X^k}^T)}{\sum_{j=1}^N \exp(X^iW{X^j}^T)}{X^k}^T\hat{W} .
\end{equation}
From \eqref{eq7}, we can get the continuous equation of a self-attention layer:
\begin{equation}\label{eq8}
    \frac{\mathrm{d}}{\mathrm{d} t} X^i=\sum_{k=1}^{N} \dfrac{\exp(X^iW{X^k}^T)}{\sum_{j=1}^N \exp(X^iW{X^j}^T)}{X^k}^T\hat{W} .
\end{equation}

Notice that $n \gg d$, we can express $\dfrac{\mathrm{d}}{\mathrm{d}t}X^i$ with $\left\{ X^i \right\}_{i=1}^N$, let 
\begin{equation}\label{eq9}
    \frac{\mathrm{d}}{\mathrm{d} t} X^i=\sum_{k=1}^{N} \dfrac{\exp(X^iW{X^k}^T)}{\sum_{j=1}^N \exp(X^iW{X^j}^T)}w_k{X^k}^T .
\end{equation}

We consider how to represent the layer normalization. According to \cite{geshkovski2023mathematical}, the layer normalization is scaling the vector in order to guarantee that the vector is always on the unit sphere.  Define 
$P_x(y)=y-\dfrac{\langle x,y\rangle}{\langle x,x\rangle } x$. It is easy to know if $x \in S^{d-1}$, there will be $P_x(y) \perp x$.

So we change \eqref{eq9} into
\begin{equation}\label{eq10}
    \frac{\mathrm{d}}{\mathrm{d} t} X^i=P_{X^i}\left(\sum_{k=1}^{N} \dfrac{\exp(X^iW{X^k}^T)}{\sum_{j=1}^N \exp(X^iW{X^k}^T)} w_k{X^k}^T\right) .
\end{equation}

According to \cite{geshkovski2023mathematical}, for theoretical research, we change \eqref{eq10} as a surrogate model:
\begin{equation}\label{eq12}
    \frac{\mathrm{d}}{\mathrm{d} t} X^i=P_{X^i}\left(\dfrac{1}{N-1} \sum_{k=1}^{N} \exp(X^iW{X^k}^T){X^k}^T\right) .
\end{equation}
We change the denominator of Softmax to be $N-1$. Because the vector $X^i$ is always on the unit sphere, this change will not lead to blowing up caused by the sum of the exponents on the numerator.

Hence, there exists $\{  w_i \}_{i=1}^N$ such that
\begin{align*}
    \dfrac{\rm d}{{\rm d}t}X^i
    & = \dfrac{1}{N-1} \sum_{k=1}^{N} \exp(X^iW{X^k}^T){w_kX^k }\\
    & \hspace{1cm} - \left\langle \dfrac{1}{N-1} \sum_{k=1}^{N} \exp(X^iW{X^k}^T) w_k X^k , X^i \right\rangle \dfrac{X^i }{\langle X^i,X^i\rangle} \\
    & = \dfrac{1}{N-1} \sum_{k=1}^{N} \exp(X^iW{X^k}^T) w_k \left(X^k-\left\langle X^k,X^i\right\rangle \dfrac{X^i }{\langle X^i,X^i\rangle}\right ) \\
    & = \dfrac{1}{N-1} \sum_{k \ne i} \exp(X^iW{X^k}^T) w_k \left( X^k-\left\langle X^k,X^i\right\rangle X^i \right).
\end{align*}

Summing up the above, we establish a model of the propagation that
\begin{equation}\label{eq13}
    \dfrac{\rm d}{{\rm d} t}X^i = \dfrac{1}{N-1} \sum_{k \ne i} \exp(X^iW{X^k}^T) w_k \left( X^k- \left\langle X^k,X^i \right\rangle X^i\right).
\end{equation}

\section{Random batch methods for graph transformer}

For a particle system with $N$ particles with ordinates $\left \{ X^i \right \} _{i=1}^N$ \footnote{In this paper, we use superscripts to represent the order of particles (vectors) and subscripts to represent the components of particles (vectors) , such as $X^i$ represent the $i$-th particle and $X^i_j$ represent the $j$-th component of the $i$-th particle}, we describe its movement with a stochastic differential equation:
\begin{equation} \label{SDE}
    \mathrm{d}X^i=b(X^i) \mathrm{d}t+\frac{1}{N-1} \sum_{j:j \ne i} m_j K_{ij}(X^i,X^j)\mathrm{d}t+\sigma \mathrm{d}W^i,
\end{equation}
as mentioned in preliminaries.
We suppose that the accurate solution of  \eqref{SDE} can be solved directly. Thus, the complexity of solving this SDE is $O(N^2)$, which comes mainly from the calculation of ${K_{ij}}$.

\cite{RBM} and \cite{jin2021convergence} provide a fast algorithm\footnote{In fact, we use the version put forward in \cite{jin2021convergence} as its format is similar to RBA.}, named Random Batch Methods, for approximate solutions, which is shown with Algorithm~\ref{alg1}.

\begin{algorithm} 
\caption{Random Batch Methods} 
\label{alg1} 
\begin{algorithmic}
    \FOR{$k$ in $1: [T/\tau]$}
        \STATE Devide $\left \{ 1,2,\cdots,pn \right \}$ into $n$ batches randomly;
        \FOR{each batch $\mathcal{C}_q$}
        \STATE Update $\tilde{X}^i$ $(i \in \mathcal{C}_q)$ by solving the following SDE with $t \in  [t_{k-1}, t_k)$
    \begin{equation}\label{p-SDE}
            d\tilde{X}^i = b(\tilde{X}^i)\mathrm{d}t + \frac{1}{p-1} \sum_{j \in \mathcal{C}_q, j \neq i} m_j K_{ij} (\tilde{X}^i, \tilde{X}^j) \mathrm{d}t + \sigma \mathrm{d}W^i.
        \end{equation}
        \ENDFOR
    \ENDFOR
\end{algorithmic} 
\end{algorithm}

While, in  Algorithm~\ref{alg1}, this part of computation is cut down into a $O(pN)$ complexity. Except for the weighted sum of $K_{ij}$, every part in \eqref{SDE} has a no more than $O(N)$ complexity, which makes less contributions to the computation cost. Thus the whole complexity is greatly reduced.

Random Batch Methods provide a brand new perspective for us to improve attention mechanism, as attention computation is a special form of the weighted sum of interaction functions $K_{ij}$.

\subsection{RBA algorithm}

It is derivated in Section~\ref{sec:modeling} that self-attetnion propagation is reasonably viewed as a sort of SDE. Therefore, we can write down the Random batch version of Algorithm~\ref{alg1}.

\begin{algorithm} 
\caption{Random Batch Attention} 
\label{alg2} 
\begin{algorithmic}
    \STATE $X \gets \text{Concat}(X,{\rm padding})$.
    \STATE Divide $\left \{ 1,2,\cdots,\left \lceil \frac{N}{p}  \right \rceil \cdot p \right \}$ into $ p $ batches randomly.
    \FOR{each batch $X_q$}
        \item $X_q \gets \text{Attn}(X_q)$.
    \ENDFOR
    \STATE $X=\text{Concat}(X_1,X_2,\cdots,X_{\left \lceil \frac{N}{p}  \right \rceil})$.
    \STATE $X \gets X[:N]$.
\end{algorithmic} 
\end{algorithm}

One can obtain an intuitive understanding with the sketch map \labelcref{fig:enter-label}.

\begin{figure}
    \centering
    \includegraphics[width=1\linewidth]{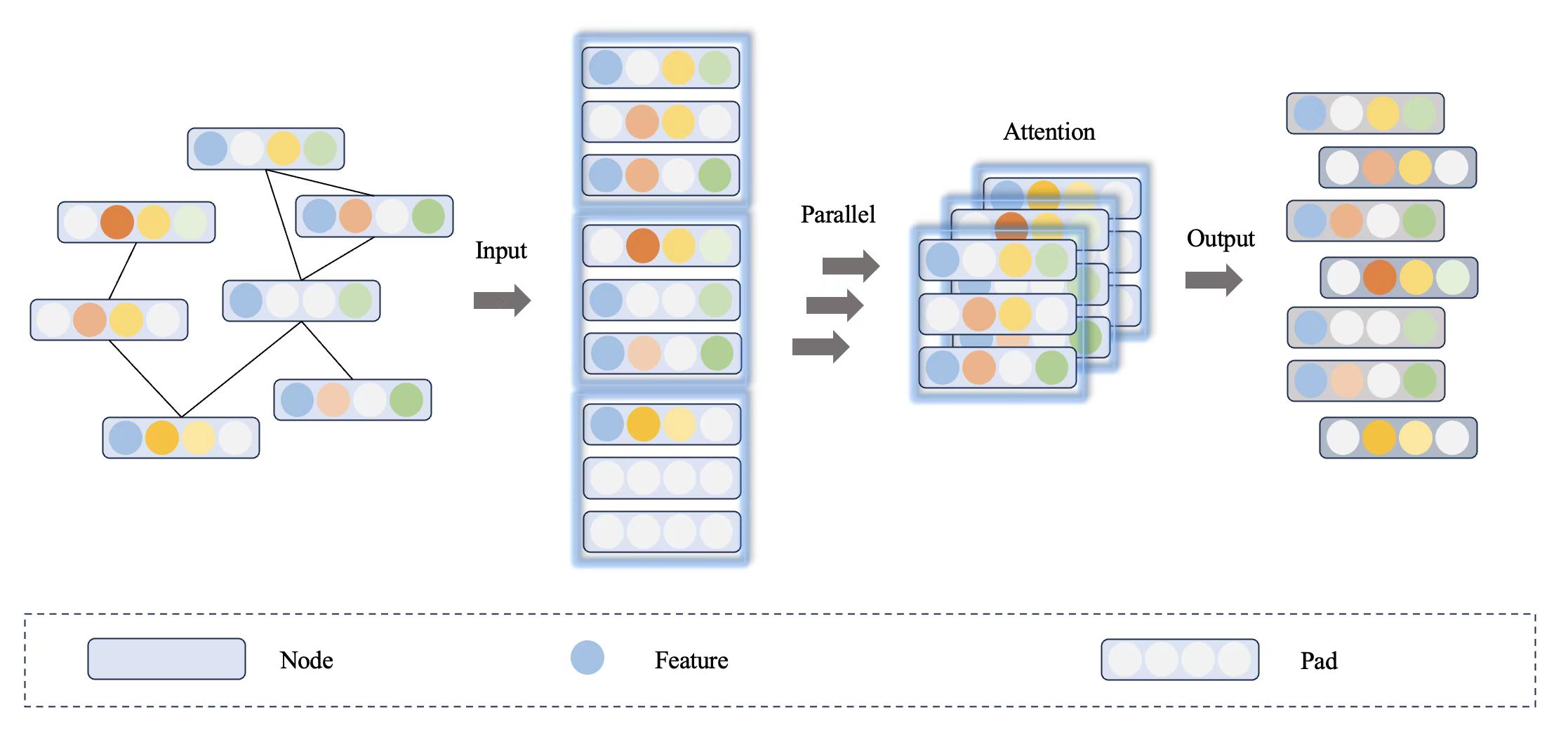}
    \caption{The Sketch Map of RBA}
    \label{fig:enter-label}
\end{figure}

The main difference between  Algorithms~\ref{alg1} and \ref{alg2} lies in the padding and truncating operation. Under the condition of long sequence input to graph transformer models, the effect of these two operations has small impact on the convergence of the algorithm.

Thus, we put forward the RBA algorithm with lower complexity for its practicality. Then, we go on to study the expressivity of RBA, which is the other important part of our algorithm.

\subsection{Convergence theorem of Random Batch Methods}

As shown in the following property, Random Batch Methods ares able to  gain much more accurate solution under some conditions.

We denote $\mathcal{C}_q^{(k)}(1 \le q \le n)$ as the batches at time $t_k$, then $\mathcal{C}^{(k)}:=\left \{ \mathcal{C}_1^{(k)} ,\cdots,\mathcal{C}_n^{(k)} \right \} $ denote the random division at time $t_k$.

Define the $\sigma$-algebra $\left \{ \mathcal{F}_{k} \right \} $ and $\left \{ \mathcal{G}_{k} \right \}$:
\begin{align}
    \mathcal{F}_{k-1}=\sigma(X_0^i,\mathcal{C}^{(j)};t \le t_{k-1},j \le k-1), \\
    \mathcal{G}_{k-1}=\sigma(X_0^i,\mathcal{C}^{(j)};t \le t_{k-1},j \le k-2).
\end{align}

It can be obtained that $\mathcal{F}_{k-1}=\sigma(\mathcal{G}_{k-1} \bigcup \sigma(\mathcal{C}_q^{(k-1)}))$, which contains the information of random division in time $[t_{k-1},t_k)$.

For the set $\mathcal{E}$ containing all the random variables that 
\begin{equation*}
    \mathcal{E}=\left \{ X_0^i,\mathcal{C}^{(j)};t \ge 0, 1 \le i \le N, j \ge 0 \right \},
\end{equation*}
according to the Kolmogorov extension theorem, there exists a probability space $(\Omega,\tilde{\mathcal{E}},\mathbb{P})$, such that $\mathcal{E}$ is on this space. We can perform mathematical expectation operations $\mathbb{E}(\cdot)$, integrating a random variable on $\Omega$ with respect to $\mathbb{P}$.

Define the norm $||\nu||=\sqrt{\mathbb{E}(|\nu|^2)}$ .

We define the error between the accurate solution of the SDE \eqref{SDE} $X^i(t)$ and its corresponding approximate solution $\tilde{X}^i(t)$ with Random Batch Methods by

\begin{equation}\label{eq:Jt}
    J(t) = \dfrac{1}{2N}\sum_{i=1}^N \mathbb{E}(|\tilde{X}^i(t)-X^i(t)|^2).
\end{equation}

\begin{lemma}\label{lem:Jt upper bd}
    For the particle system represented by the stochastic differential equation
\begin{equation}
    \mathrm{d}X^i=b(X^i) \mathrm{d}t+\frac{1}{N-1} \sum_{j:j \ne i} m_j K_{ij}(X^i,X^j)\mathrm{d}t+\sigma dB^i,
\end{equation}

if the function $b$ and $K$ satisfies the following:
\begin{enumerate}
    \item[(1)] $\dfrac{1}{N} \sum_{j=1}^N m_j=M$, $\max_{j} \left | m_j \right | \le A$ , $M$ and $A$ are constant.
    \item[(2)] $b$ is one-sided Lipschitz: $(z_1-z_2) \cdot (b(z_1)-b(z_2)) \le \beta \left | z_1-z_2 \right | ^2$.
    \item[(3)] $b$ and  $\nabla b$ have polynomial growth: $\left | b(z) \right |+\left | b(z) \right | \le C(1+|z|)^q$.
    \item[(4)] $K_{ij}$ and their derivatives up to second order (3 totally) are uniformly bounded in $1 \le i,j \le N$.
\end{enumerate}

then RBM has the following upper bound
\begin{equation}\label{}
    \sup_{t \le T}J(t) \le C(T) \left(\dfrac{\tau}{p-1}+\tau^2 \right).
\end{equation}
where $C(T)$ is a constant depend on $T$ and $\tau$ is the time step.
\end{lemma}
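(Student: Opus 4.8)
The plan is to control the mean-square error $J(t)$ by a Gr\"onwall argument after deriving a closed differential inequality for it. The starting point is that the exact solution $X^i$ and the random-batch solution $\tilde X^i$ are driven by the \emph{same} Brownian motions $W^i$, so in the error process $Z^i := \tilde X^i - X^i$ the diffusion term $\sigma\,\mathrm{d}W^i$ cancels and $Z^i$ solves a pathwise ODE. Consequently
\begin{equation*}
\frac{1}{2}\frac{\mathrm{d}}{\mathrm{d}t}|Z^i|^2 = Z^i\cdot\Big[\big(b(\tilde X^i)-b(X^i)\big) + \Big(\tfrac{1}{p-1}\!\!\sum_{j\in\mathcal{C}_q,\,j\ne i}\!\! m_j K_{ij}(\tilde X^i,\tilde X^j) - \tfrac{1}{N-1}\sum_{j\ne i} m_j K_{ij}(X^i,X^j)\Big)\Big].
\end{equation*}
I would then split the interaction difference by inserting the batch interaction evaluated at the \emph{true} trajectory, writing the bracket as a Lipschitz part $\tfrac{1}{p-1}\sum_{j\in\mathcal{C}_q} m_j\big(K_{ij}(\tilde X^i,\tilde X^j)-K_{ij}(X^i,X^j)\big)$ plus the \textbf{consistency error}
\begin{equation*}
\chi^i := \frac{1}{p-1}\sum_{j\in\mathcal{C}_q,\,j\ne i} m_j K_{ij}(X^i,X^j) - \frac{1}{N-1}\sum_{j\ne i} m_j K_{ij}(X^i,X^j).
\end{equation*}

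Next I would estimate the three contributions after summing over $i$ and taking expectations. The $b$-increment is handled directly by the one-sided Lipschitz hypothesis (2), giving $\le \beta|Z^i|^2$; the $K$-increment is controlled by the Lipschitz continuity of $K_{ij}$ that follows from the uniform bounds on its first derivatives in (4), producing terms of the form $|Z^i|^2+|Z^j|^2$ that sum to $O(J(t))$ after using the weight bound (1). Together these yield the $C\,J(t)$ term needed for Gr\"onwall. Before doing so I would record the standard a priori moment bounds $\sup_{t\le T}\mathbb{E}|X^i|^{2q}<\infty$ and the analogous bound for $\tilde X^i$, which the polynomial-growth hypothesis (3) requires in order to keep all constants finite and uniform in $N$.

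The crux is the consistency term $\mathbb{E}[Z^i\cdot\chi^i]$, and this is where the two error scales $\tau/(p-1)$ and $\tau^2$ are generated. The key structural facts are that, conditioned on the filtration $\mathcal{G}_{k-1}$ before the random division on $[t_{k-1},t_k)$, random batching is unbiased, $\mathbb{E}[\chi^i\mid\mathcal{G}_{k-1}]=0$, while its conditional second moment obeys the variance bound $\mathbb{E}|\chi^i|^2\le C/(p-1)$, which is exactly the sampling-without-replacement variance and uses the uniform boundedness of $K_{ij}$ in (4). I would then decompose $Z^i(s)=Z^i(t_{k-1})+\int_{t_{k-1}}^s\dot Z^i\,\mathrm{d}r$ on each subinterval: because $Z^i(t_{k-1})$ is $\mathcal{G}_{k-1}$-measurable, the tower property kills $\mathbb{E}[Z^i(t_{k-1})\cdot\chi^i]=0$, so only the increment survives. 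Inside the increment, the self-correlation $\mathbb{E}\big[\int_{t_{k-1}}^s\chi^i\,\mathrm{d}r\cdot\chi^i(s)\big]$ is of order $\tau\cdot\mathbb{E}|\chi^i|^2 = O(\tau/(p-1))$, while the remaining drift part of the increment, bounded by Young's inequality against $\chi^i$, contributes the local-truncation scale $O(\tau^2)$ plus an absorbable multiple of $J$.

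Collecting everything produces $\frac{\mathrm{d}}{\mathrm{d}t}J(t)\le C\,J(t)+C\big(\tfrac{\tau}{p-1}+\tau^2\big)$, and Gr\"onwall's inequality from the preliminaries gives the claimed bound with $C(T)=Ce^{CT}$. The main obstacle I anticipate is precisely the handling of $\mathbb{E}[Z^i\cdot\chi^i]$: one must respect the $\mathcal{F}_{k-1}/\mathcal{G}_{k-1}$ filtration bookkeeping so that unbiasedness can be invoked on the leading term, and one must verify the variance bound $\mathbb{E}|\chi^i|^2\le C/(p-1)$ carefully, since it is the only place the batch size $p$ enters and it is what distinguishes RBM from a naive subsampling scheme.
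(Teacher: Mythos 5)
Your proposal is correct and follows essentially the same route as the paper: the paper states this lemma without proof (importing it from the Random Batch Methods literature), but its proof of Theorem~1 --- the RBA specialization --- uses exactly your argument, namely cancellation of the common Brownian term in $Z^i=\tilde X^i-X^i$, the Lipschitz/consistency split with $\chi_i$ evaluated along a fixed trajectory, unbiasedness of $\chi_i$ conditional on $\mathcal{G}_{k-1}$ to annihilate the $Z^i(t_{k-1})$ contribution, the sampling-without-replacement variance bound of order $1/(p-1)$ applied to the self-correlation of $\chi_i$ over one step (producing the $\tau/(p-1)$ scale), an $O(\tau^2)$ local-truncation remainder, and Gr\"onwall. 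Your additional treatment of the drift $b$ via the one-sided Lipschitz hypothesis and of the weights $m_j$ via assumption (1) is the natural extension needed for the general statement and raises no difficulty.
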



\section{The convergence theorem of RBA and proof}
\label{convergence}

In this section, we propose the following the convergence theorem for RBM self-attention mechanism  based on the above Lemma~\ref{lem:Jt upper bd}. 
To be mentioned, for convenience, we transpose all the token vectors into column vectors, which means $X^i \in \mathbb{R}^d$.

For the convenience of the following proof,  we denote 
\begin{equation*}
\label{defs}
\begin{aligned}
    Z^i&=\tilde{X}^i-X^i,  \\
F_i(x)&=\dfrac{1}{N-1} \sum_{j:j \ne i}K_{ij}(x^i,x^j), \\
\chi_i(x)&=\dfrac{1}{p-1}\sum_{j \in \mathcal{C}_i, j \ne i}K_{ij}(x_i,x_j)-\dfrac{1}{N-1}\sum_{j: j \ne i}K_{ij}(x_i,x_j),
\end{aligned}
\end{equation*}
where $x_i \in \mathbb{R}^d$ for all $i$ (thus $x \in \mathbb{R}^{Nd}$), $\mathcal{C}_i$ is the random batch containing $x_i$ in a random division, and $K_{ij}(X^i,X^j)=exp({X^i}^TW^T{X^j}) w_j \left( X^j-\left\langle X^j,X^i \right\rangle X^i  \right)$ described in the following \eqref{theorem1.1} and \eqref{theorem1.2} .


\begin{theorem}\label{theorem}
    For the process
\begin{equation}\label{theorem1.1}
    {\mathrm d}X^i = \dfrac{1}{N-1} \sum_{j:j \ne i} \exp({X^i}^TW^T{X^j}) w_j \left( X^j-\left\langle X^j,X^i \right\rangle X^i  \right) \mathrm{d}t
\end{equation}
with fixed parametres $W$ and $\{w_i\}_{i=1}^N$, in time $t \in [t_{k-1},t_k)$, where $t_k=k\tau$ and $k=1, \cdots, [T/\tau]$ ,
if we change it as 
\begin{equation}\label{theorem1.2}
    \mathrm{d}X^i = \dfrac{1}{p-1} \sum_{j \in \mathcal{C}_i,j \ne i} \exp({X^i}^TW^T{X^j}) w_j \left( X^j-\left\langle X^j,X^i \right\rangle X^i  \right) \mathrm{d}t,
\end{equation}
the solution $X^i$ and $\tilde{X}^i$ with time step $\tau$ has an error bound that
\begin{equation*}
     \sup_{t \le T}J(t) \le C(T) (\dfrac{\tau}{p-1}+\tau^2).
\end{equation*}
\end{theorem}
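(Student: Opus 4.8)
\section*{Proof proposal}

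The plan is to recognize \eqref{theorem1.1}--\eqref{theorem1.2} as a special instance of the abstract random-batch system treated in Lemma~\ref{lem:Jt upper bd}, and then to \emph{verify its four hypotheses} for the attention kernel, so that the conclusion may be quoted directly. Concretely, I would make the identification $b \equiv 0$, $\sigma \equiv 0$, $m_j = w_j$, and $K_{ij}(x,y) = \exp(x^{T}W^{T}y)\,(y - \langle y,x\rangle\,x)$, so that the summand in \eqref{theorem1.1} is exactly $m_j K_{ij}(X^i,X^j)$ and the batch version \eqref{theorem1.2} is exactly \eqref{p-SDE}. Under this dictionary the error functional $J(t)$ of \eqref{eq:Jt} is precisely the object the Lemma controls, and it remains only to check that the kernel meets the regularity demands.

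The crucial structural fact I would establish first is that the dynamics preserve the unit sphere. Differentiating along \eqref{theorem1.1} gives $\frac{\mathrm d}{\mathrm dt}|X^i|^2 = \frac{2}{N-1}\sum_{j\neq i}\exp(\cdots)\,w_j\,\langle X^i,\,X^j-\langle X^j,X^i\rangle X^i\rangle$, and each inner product reduces to $\langle X^i, X^j\rangle - \langle X^j,X^i\rangle|X^i|^2$, which vanishes once $|X^i|=1$. Hence, initialized on $S^{d-1}$, every particle stays on $S^{d-1}$ for all time, so the whole trajectory lives in the compact set $(S^{d-1})^N$. This compactness is what converts the formally unbounded exponential weight into a bounded one.

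With the sphere invariance in hand, the hypotheses of Lemma~\ref{lem:Jt upper bd} fall into place. Conditions (2) and (3) are immediate since $b\equiv 0$ (one-sided Lipschitz with $\beta=0$, polynomial growth with all constants zero); condition (1) is the assumed boundedness of the fixed weights $\{w_j\}$. The real work is condition (4): the uniform boundedness of $K_{ij}$ together with its derivatives up to second order. Since the chosen $K_{ij}$ is independent of $i,j$ and is a product of the everywhere-smooth maps $\exp(x^{T}W^{T}y)$ and $y-\langle y,x\rangle x$, all of its partials up to order two are continuous on $\mathbb{R}^d\times\mathbb{R}^d$; restricting these continuous maps to the compact domain $S^{d-1}\times S^{d-1}$ yields the required uniform bound, with $\|W\|$ controlling the exponent $x^{T}W^{T}y$ through Cauchy--Schwarz.

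I expect the second-derivative estimate in condition (4) to be the main technical obstacle: one must differentiate $\exp(x^{T}W^{T}y)(y-\langle y,x\rangle x)$ twice in the mixed variables $x,y$, producing several terms in which the exponential multiplies tensors built from $W$ and from $x,y$, and then argue that each such term is continuous and hence bounded on the compact product of spheres. This is conceptually routine but bookkeeping-heavy, and care is needed to state the bounds uniformly in $i,j$ (which holds here precisely because only $w_j$ carries index dependence). Once condition (4) is secured, I would simply invoke Lemma~\ref{lem:Jt upper bd} to obtain $\sup_{t\le T}J(t)\le C(T)\big(\tfrac{\tau}{p-1}+\tau^2\big)$, completing the proof.
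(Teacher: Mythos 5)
Your proposal is correct in substance but takes a genuinely different route from the paper. You reduce the theorem to Lemma~\ref{lem:Jt upper bd} by verifying its hypotheses for the attention kernel, with the sphere invariance $\frac{\mathrm{d}}{\mathrm{d}t}|X^i|^2=\frac{2}{N-1}\sum_{j\neq i}\exp(\cdot)w_j\langle X^i,X^j\rangle\bigl(1-|X^i|^2\bigr)=0$ supplying the compactness that makes $K_{ij}$ and its derivatives bounded. The paper instead re-proves the convergence from scratch: it establishes Lemmas~\ref{lemma2}--\ref{lemma6} (the combinatorial expectations $\mathbb{E}I_{ij}$ and $\mathbb{E}(I_{ij}I_{ik})$, the unbiasedness and variance $\bigl(\frac{1}{p-1}-\frac{1}{N-1}\bigr)\Lambda_i$ of the batch fluctuation $\chi_i$, moment and increment bounds, and the batch $L^2$ averaging inequality), then runs a Gr\"onwall argument on $\frac{\mathrm{d}J}{\mathrm{d}t}$ decomposed into a Lipschitz term plus $I_1+I_{21}+I_{22}$. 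Your route is much shorter and cleanly isolates what is special about the attention kernel (sphere invariance); the paper's route is self-contained and makes visible where the $\tau/(p-1)$ term actually comes from, namely the variance of $\chi_i$ entering through $I_{22}$. Two points need attention before your reduction is airtight. First, you verify sphere invariance only for \eqref{theorem1.1}; you must also note that the random-batch flow \eqref{theorem1.2} preserves $S^{d-1}$ by the identical computation, since otherwise $\tilde{X}^i$ could leave the compact set on which your bounds hold. Second, hypothesis (4) of Lemma~\ref{lem:Jt upper bd} asks for bounds that are uniform on the whole state space, whereas $\exp(x^TW^Ty)(y-\langle y,x\rangle x)$ is unbounded on $\mathbb{R}^d\times\mathbb{R}^d$; you should either restate the lemma on the invariant manifold or replace $K_{ij}$ by a smooth globally bounded kernel agreeing with it near $S^{d-1}\times S^{d-1}$, which leaves both flows unchanged. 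With those two sentences added, quoting the lemma is a legitimate and arguably cleaner proof than the paper's.
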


Before proving this theorem, we will first establish and prove five lemmas that will be used in the main proof.

\begin{lemma} \label{lemma2}
    For different $i,j,k$, it holds that
\begin{equation*}
\begin{aligned}
    \mathbb{E}I_{ij}&=\dfrac{p-1}{N-1},\\
    \mathbb{E}(I_{ij}I_{il})&=\dfrac{(p-1)(p-2)}{(N-1)(N-2)}.
\end{aligned}
\end{equation*}
where the expectation is taken with respect to the random division of batches.
\end{lemma}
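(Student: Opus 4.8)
The plan is to identify $I_{ij}$ as the indicator random variable $\mathbf{1}\{j \in \mathcal{C}_i\}$ recording whether particle $j$ lands in the same random batch $\mathcal{C}_i$ as particle $i$, and then to carry out the elementary hypergeometric counting that governs a uniformly random partition of the $N$ particles into batches of size $p$. Both identities are statements about the joint law of batch membership, so everything reduces to counting the ways of choosing the $p-1$ companions of particle $i$ among the remaining $N-1$ particles. The padding/truncation step in Algorithm~\ref{alg2}, which makes the index set a multiple of $p$, guarantees that every batch has size exactly $p$, so these counts are exact.

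First, for $\mathbb{E}I_{ij}$ I would fix $i$ and observe that, by exchangeability of the random division, the batch containing $i$ consists of $i$ together with $p-1$ further particles drawn uniformly without replacement from the other $N-1$. Hence for any fixed $j \neq i$ the event $\{I_{ij}=1\}$ is exactly the event that $j$ is among those $p-1$ companions, which has probability $\tfrac{p-1}{N-1}$; taking the expectation of the indicator gives $\mathbb{E}I_{ij} = \tfrac{p-1}{N-1}$.

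Next, for $\mathbb{E}(I_{ij}I_{il})$ with $i,j,l$ distinct, I would note that $I_{ij}I_{il}$ is itself the indicator of the event that \emph{both} $j$ and $l$ belong to $\mathcal{C}_i$. The cleanest route is the conditional-probability chain: conditioned on $j \in \mathcal{C}_i$, there remain $p-2$ free slots in $i$'s batch to be filled from the remaining $N-2$ particles, so $\mathbb{P}(l \in \mathcal{C}_i \mid j \in \mathcal{C}_i) = \tfrac{p-2}{N-2}$. Multiplying by the marginal $\tfrac{p-1}{N-1}$ yields
\[
\mathbb{E}(I_{ij}I_{il}) = \frac{p-1}{N-1}\cdot\frac{p-2}{N-2} = \frac{(p-1)(p-2)}{(N-1)(N-2)}.
\]
As a cross-check I would recompute the same quantity as $\binom{N-3}{p-3}/\binom{N-1}{p-1}$ (choosing $i$'s companions so that both $j$ and $l$ are included) and simplify the factorials to the same expression.

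Since the computation is elementary, there is no deep obstacle. The only genuine care needed is to pin down the exact probability model for the random division — namely that batches have fixed size $p$ and are sampled uniformly — and to verify that $\mathcal{C}_i$ can legitimately be generated by the ``fix $i$, then sample its $p-1$ companions'' procedure, which is precisely what licenses the symmetry argument and the conditional-probability chain above.
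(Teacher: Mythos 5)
Your proposal is correct and rests on the same elementary counting idea as the paper: both reduce the two expectations to hypergeometric probabilities for the batch $\mathcal{C}_i$ containing particle $i$ under a uniform random partition into batches of size $p$. The only cosmetic difference is that the paper computes the ratios of explicit partition counts $\binom{N-2}{p-2}M(n-1,p)/M(n,p)$ and $\binom{N-3}{p-3}M(n-1,p)/M(n,p)$ with $M(n,p)=\frac{(pn)!}{(p!)^n\,n!}$, whereas you shortcut to the marginal law of $\mathcal{C}_i$ as a uniform $(p-1)$-subset and use a conditional-probability chain, which is equivalent and, as you note, matches the paper's computation upon simplifying the factorials.
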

   
\begin{proof}
We note that $I_{ij}=1$ if $i$ and $j$ are in the same batch, $I_{ij}=0$ if else. 

We begin with a set of $N=np$ distinct particles. The total number of ways to divide these particles into $n$ batches, with each batch containing $p$ particles, is given by the multinomial coefficient:
$M(n,p)=\frac{(pn)!}{(p!)^n n!}$.
Next, we consider the number of these arrangements in which two specific particles, say $i$ and $j$, are placed in the same batch.
To calculate this, we can follow a two-step process. First, we form a batch containing particles $i$ and $j$. This requires selecting an additional $p-2$ particles from the remaining $N-2$ particles. The number of ways to do this is $\binom{N-2}{p-2}$.
Second, once this first batch is formed, we are left with $N-p$ particles to distribute into the remaining $n-1$ batches, each of size $p$. The number of ways to perform this second step is equivalent to the total number of divisions for a smaller set, which is $M(n-1,p)$.
Therefore, the number of ways to divide the particles such that $i$ and $j$ are in the same batch is given by the product:
$\binom{N-2}{p-2} \cdot M(n-1,p)$. Thus, 
\begin{equation*}
    \mathbb{E}I_{ij}=\mathbb{P}(I_{ij}=1)=\dfrac{\binom{N-2}{p-2}M(n-1,p)}{M(n,p)}=\dfrac{p-1}{N-1}.
\end{equation*}

Similarly, for dividing particle $i,j$ and $i,k$ into the same batch, there are $\binom{N-3}{p-3}$ ways. Then there are $M(n-1,p)$ ways to divide the other particles into $n-1$ batches of size $p$. Then,
\begin{equation*}
    \mathbb{E}(I_{ij}I_{ik})=\mathbb{P}(I_{ij}=1,I_{ik}=1)=\dfrac{\binom{N-3}{p-3}M(n-1,p)}{M(n,p)}=\dfrac{(p-1)(p-2)}{(N-1)(N-2)}.
\end{equation*}
\end{proof}

\begin{lemma} \label{lemma3}
Let $x=(x_1,\cdots,x_N) \in \mathbb{R}^{N d}$, where $x_i$ is the coordinates of particle $i$. It holds that
\begin{equation*}
    \mathbb{E}{\chi_i}(x)=0.
\end{equation*}
and then
\begin{equation*}
    \mathrm{Var}(\chi_i(x))=\mathbb{E}|{\chi_i}(x)|^2=\left(\dfrac{1}{p-1}-\dfrac{1}{N-1}\right)\Lambda_i(x),
\end{equation*}
where $\chi_i(x)$ is given by the last definition in \eqref{defs}, and
\begin{equation*}
    \Lambda_i(x):=\dfrac{1}{N-2}\sum_{j:j \ne i}\bigl|K_{ij}(x_i,x_j)-\dfrac{1}{N-1}\sum_{k:k \ne i}K_{ik}(x_i,x_k)\bigr|^2.
\end{equation*}
\end{lemma}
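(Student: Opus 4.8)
The plan is to reduce everything to the indicator variables $I_{ij}$ from Lemma~\ref{lemma2} and then exploit that, with $x$ fixed, the vectors $K_{ij}(x_i,x_j)$ are deterministic while only the batch assignment is random. First I would rewrite the batch sum with indicators: since $I_{ij}=1$ exactly when $j$ shares a batch with $i$, we have $\sum_{j \in \mathcal{C}_i, j \ne i} K_{ij}(x_i,x_j) = \sum_{j:j \ne i} I_{ij} K_{ij}(x_i,x_j)$, so that
\[
\chi_i(x) = \frac{1}{p-1}\sum_{j:j \ne i} I_{ij}K_{ij}(x_i,x_j) - \frac{1}{N-1}\sum_{j:j \ne i}K_{ij}(x_i,x_j).
\]
Taking the expectation termwise and inserting $\mathbb{E}I_{ij}=\frac{p-1}{N-1}$ from Lemma~\ref{lemma2} immediately yields $\mathbb{E}\chi_i(x)=0$, which in particular gives $\mathrm{Var}(\chi_i(x))=\mathbb{E}|\chi_i(x)|^2$.

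For the variance the key device is centering. Write $\bar K_i := \frac{1}{N-1}\sum_{j:j\ne i}K_{ij}(x_i,x_j)$ and $\widetilde K_{ij} := K_{ij}(x_i,x_j)-\bar K_i$, so that $\sum_{j:j\ne i}\widetilde K_{ij}=0$ and $\Lambda_i(x)=\frac{1}{N-2}\sum_{j:j\ne i}|\widetilde K_{ij}|^2$. Because each batch containing $i$ has exactly $p-1$ other members, $\sum_{j:j\ne i}I_{ij}=p-1$ is non-random, hence $\frac{1}{p-1}\sum_{j:j\ne i}I_{ij}\bar K_i = \bar K_i$ and the two sums in $\chi_i$ collapse into a single centered sum
\[
\chi_i(x) = \frac{1}{p-1}\sum_{j:j \ne i} I_{ij}\widetilde K_{ij}.
\]
I would then expand $|\chi_i(x)|^2$ into diagonal ($j=l$) and off-diagonal ($j\ne l$) contributions, use $I_{ij}^2=I_{ij}$ so that $\mathbb{E}I_{ij}^2=\frac{p-1}{N-1}$, and apply the second identity of Lemma~\ref{lemma2}, namely $\mathbb{E}(I_{ij}I_{il})=\frac{(p-1)(p-2)}{(N-1)(N-2)}$ for distinct $j,l$.

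The step I expect to be the crux is eliminating the off-diagonal sum. From $\sum_{j:j\ne i}\widetilde K_{ij}=0$ one gets $0 = \bigl|\sum_{j:j\ne i}\widetilde K_{ij}\bigr|^2 = \sum_{j:j\ne i}|\widetilde K_{ij}|^2 + \sum_{j\ne l}\langle \widetilde K_{ij},\widetilde K_{il}\rangle$, so the off-diagonal inner products equal $-\sum_{j:j\ne i}|\widetilde K_{ij}|^2$. Substituting this into the expanded variance turns the whole expression into $\frac{N-p}{(p-1)(N-1)(N-2)}\sum_{j:j\ne i}|\widetilde K_{ij}|^2$, where the coefficient arises from collecting
\[
\frac{1}{(p-1)^2}\left[\frac{p-1}{N-1}-\frac{(p-1)(p-2)}{(N-1)(N-2)}\right] = \frac{N-p}{(p-1)(N-1)(N-2)} = \left(\frac{1}{p-1}-\frac{1}{N-1}\right)\frac{1}{N-2}.
\]
Recognizing that $\frac{1}{N-2}\sum_{j:j\ne i}|\widetilde K_{ij}|^2 = \Lambda_i(x)$ then yields exactly $\bigl(\frac{1}{p-1}-\frac{1}{N-1}\bigr)\Lambda_i(x)$. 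The only subtlety to verify carefully is the bookkeeping that the constant $\bar K_i$ term drops out precisely because $\sum_{j:j\ne i} I_{ij}=p-1$ is deterministic; once that centering is secured, the remainder is routine algebra driven entirely by Lemma~\ref{lemma2}.
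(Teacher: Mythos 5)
Your proposal is correct and follows essentially the same route as the paper: both reduce $\chi_i$ to the indicator variables $I_{ij}$, compute the mean via $\mathbb{E}I_{ij}=\frac{p-1}{N-1}$, and obtain the variance by splitting into diagonal and off-diagonal terms using the two moments from Lemma~\ref{lemma2}. Your preliminary centering (replacing $K_{ij}$ by $K_{ij}-\bar K_i$, justified by the deterministic identity $\sum_{j\ne i}I_{ij}=p-1$) is only a cleaner bookkeeping of the same algebra the paper carries out directly on $\mathbb{E}(f_i^2)-(\mathbb{E}f_i)^2$, and the final coefficient matches.
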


\begin{proof}
Note that $\sum_{j: j \ne i}K_{ij}(x_i,x_j)$ is independent of the random division. Denote by
\begin{equation*}
    f_i(x):=\dfrac{1}{p-1}\sum_{j \in \mathcal{C}^i, j \ne i}K_{ij}(x_i,x_j).
\end{equation*}
Thus,
\begin{align*}
    \mathbb{E}f_i(x)
    & = \mathbb{E}\left\{\dfrac{1}{p-1}\sum_{j: j \ne i}K_{ij}(x_i,x_j) I_{ij}\right\}\\
    & = \dfrac{1}{p-1}\sum_{j: j \ne i}K_{ij}(x_i,x_j)\mathbb{E}I_{ij} \\
    & = \dfrac{1}{p-1}\sum_{j: j \ne i}K_{ij}(x_i,x_j)\dfrac{p-1}{N-1} \\
    & = \dfrac{1}{N-1}\sum_{j: j \ne i}K_{ij}(x_i,x_j).
\end{align*}
So 
\begin{equation*}
    \mathbb{E}{\chi_i}(x)=0.
\end{equation*}

\begin{align*}
    \mathbb{E}(f_i(x)^2) =
    & \dfrac{1}{(p-1)^2}\sum_{j: j \ne i}|K_{ij}(x_i,x_j)|^2 \mathbb{E}I_{ij} \\
    & + \dfrac{1}{(p-1)^2} \sum_{j,k:j \ne i,k \ne i,j \ne k}K_{ij}(x_i,x_j)K_{ik}(x_i,x_k) \mathbb{E}(I_{ij}I_{ik}) 
\end{align*}
Thus
\begin{align*}
    \mathrm{Var}(\chi_i(x))
    & = \mathrm{Var}(f_i(x)) \\
    & = \mathbb{E}(f_i(x)^2)-(\mathbb{E}f_i(x))^2 \\
    & = \dfrac{1}{(p-1)^2}\sum_{j: j \ne i}K_{ij}(x_i,x_j)^2 \mathbb{E}I_{ij}  \\
    & \ \ \ + \dfrac{1}{(p-1)^2} \sum_{j,k:j \ne i,k \ne i,j \ne k}K_{ij}(x_i,x_j)K_{ik}(x_i,x_k) \mathbb{E}(I_{ij}I_{ik}) \\
    & \ \ \ - (\dfrac{1}{N-1}\sum_{j: j \ne i}K_{ij}(x_i,x_j))^2 \\
    & = (\dfrac{1}{p-1}-\dfrac{1}{N-1}) ( \dfrac{1}{(N-1)} \sum_{j: j \ne i}K_{ij}(x_i,x_j)^2  \\
    & \ \ \ - \dfrac{1}{(N-1)(N-2)} \sum_{j,k:j \ne i,k \ne i,j \ne k}K_{ij}(x_i,x_j)K_{ik}(x_i,x_k) ) \\
    & = (\dfrac{1}{p-1}-\dfrac{1}{N-1}) \dfrac{1}{N-2} \sum_{j:j \ne i}\left|K_{ij}(x_i,x_j)-\dfrac{1}{N-1}\sum_{k:k \ne i}K_{ik}.(x_i,x_k)\right|^2.
\end{align*}
\end{proof}

\begin{lemma} \label{lemma4}
Denote $t_k=k\tau$.
Let $X^i$ and $\tilde{X}^i$ be solutions to \eqref{theorem1.1} and  \eqref{theorem1.2} respectively. We have
\begin{equation}\label{eq16}
    \sup_{t \le T} (\mathbb{E}|X^i(t)|^q+\mathbb{E}|\tilde{X}^i(t)|^q) \le C_q.
\end{equation}
Besides, for all $q \ge 2$ and $k >0$, it holds almost surely that
\begin{equation*}
    \sup_{t \in [t_{k-1},t_k)} \left | \mathbb{E}(|\tilde{X}^i(t)|^q|\mathcal{F}_{k-1}) \right | \le C_1 |\tilde{X}^i(t_{k-1})|^q + C_2.
\end{equation*}
Furthermore, it holds almost surely that 
\begin{equation}\label{eq17}
\begin{aligned}[c]
\left|\mathbb{E}(\tilde{X}(t)-\tilde{X}(t_{k-1})|\mathcal{F}_{k-1})\right| \le C(1+|\tilde{X}(t_{k-1})|^q) \tau,
\\
\left| \mathbb{E}(|\tilde{X}(t)-\tilde{X}(t_{k-1})|^2|\mathcal{F}_{k-1}) \right| \le  C(1+|\tilde{X}(t_{k-1})|^q) \tau.
\end{aligned}
\end{equation}
where $C_q,C_1,C_2,C$ are constants.
\end{lemma}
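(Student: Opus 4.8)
The plan is to exploit the geometric structure hidden in the kernel. Because $K_{ij}(X^i,X^j)=\exp({X^i}^TW^T X^j)\,w_j\bigl(X^j-\langle X^j,X^i\rangle X^i\bigr)$ is exactly the layer-normalized (tangentially projected) interaction constructed in Section~\ref{sec:modeling}, I expect the dynamics \eqref{theorem1.1} and \eqref{theorem1.2} to preserve the unit sphere, and I would establish this first. Differentiating $|X^i|^2$ along \eqref{theorem1.1} and using $\langle X^i, X^j-\langle X^j,X^i\rangle X^i\rangle = \langle X^i,X^j\rangle(1-|X^i|^2)$ gives
\begin{equation*}
\frac{\mathrm d}{\mathrm dt}|X^i|^2 = \frac{2}{N-1}\sum_{j\ne i}\exp({X^i}^TW^T X^j)\,w_j\,\langle X^i,X^j\rangle\bigl(1-|X^i|^2\bigr).
\end{equation*}
The right-hand side vanishes whenever $|X^i|=1$, so by uniqueness of the ODE flow $|X^i(t)|\equiv 1$ once the initial tokens lie on $S^{d-1}$ (which holds since layer normalization places them there); the identical computation applies to \eqref{theorem1.2}. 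This single fact does most of the work.

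Granting sphere preservation, the global moment bound \eqref{eq16} becomes immediate with $C_q=2$, since $|X^i(t)|=|\tilde X^i(t)|=1$ forces $\mathbb E|X^i(t)|^q=\mathbb E|\tilde X^i(t)|^q=1$ uniformly in $t\le T$. The conditional moment bound follows for the same reason: on $[t_{k-1},t_k)$ the batch division $\mathcal C^{(k-1)}$ is $\mathcal F_{k-1}$-measurable and there is no driving noise, so $\tilde X^i(t)$ is itself $\mathcal F_{k-1}$-measurable and $\mathbb E(|\tilde X^i(t)|^q\mid\mathcal F_{k-1})=1$, which is dominated by $C_1|\tilde X^i(t_{k-1})|^q+C_2$ for any choice with $C_1+C_2\ge 1$.

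For the one-step increment estimates \eqref{eq17}, I would first record that the projected kernel is uniformly bounded on the sphere: from $|X^i|=|X^j|=1$ we get $|{X^i}^TW^T X^j|\le\|W\|$ and $|X^j-\langle X^j,X^i\rangle X^i|\le 2$, so $|K_{ij}|\le \kappa:=2e^{\|W\|}\max_j|w_j|$. Integrating \eqref{theorem1.2} over $[t_{k-1},t]$ and using $|t-t_{k-1}|\le\tau$ then yields $|\tilde X^i(t)-\tilde X^i(t_{k-1})|\le \kappa\tau$ pointwise, hence almost surely. Since the dynamics on the interval are deterministic given $\mathcal F_{k-1}$, the conditional expectation equals the increment itself, giving the first line of \eqref{eq17}; squaring gives $|\tilde X^i(t)-\tilde X^i(t_{k-1})|^2\le\kappa^2\tau^2\le\kappa^2 T\,\tau$, which is the second line. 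In both estimates the factor $(1+|\tilde X^i(t_{k-1})|^q)$ is merely a bounded constant.

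The genuinely delicate point is the conditioning and measurability bookkeeping: one must check that the division used on $[t_{k-1},t_k)$ indeed lies in $\mathcal F_{k-1}$ and that, absent a Brownian term, conditioning collapses the evolution to a deterministic flow. I expect this to be the main obstacle, paired with the realization that the exponential-type attention kernel — unbounded on all of $\mathbb R^d$ — is controllable \emph{only} because sphere preservation confines every trajectory to $S^{d-1}$. Once that confinement is secured, the boundedness and growth hypotheses required by Lemma~\ref{lem:Jt upper bd} hold automatically, and the remaining estimates reduce to elementary integration rather than a Grönwall-type argument.
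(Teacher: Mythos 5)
Your proposal is correct, but it reaches the conclusions by a genuinely different route than the paper. The paper never proves sphere invariance; it differentiates the $q$-moment, bounds $\frac{\mathrm{d}}{\mathrm{d}t}\mathbb{E}(|X^i|^q)$ using the (assumed) boundedness of $K_{ij}$, applies Young's inequality to close a linear differential inequality, and only at the end remarks that ``because $X^i$ is on the unit sphere'' the moments are uniformly bounded; the second line of \eqref{eq17} is then obtained via the chain rule (It\^o with zero diffusion) and a two-term decomposition of $\delta K_{ij}$. You instead make the confinement to $S^{d-1}$ the first and central step, by computing $\frac{\mathrm{d}}{\mathrm{d}t}|X^i|^2=\frac{2}{N-1}\sum_{j\ne i}\exp({X^i}^TW^TX^j)w_j\langle X^i,X^j\rangle(1-|X^i|^2)$ and invoking uniqueness of the ODE flow, after which \eqref{eq16} holds with equality, the conditional bound collapses because the batch division on $[t_{k-1},t_k)$ is $\mathcal{F}_{k-1}$-measurable and there is no driving noise, and \eqref{eq17} follows from the pointwise bound $|K_{ij}|\le 2e^{\|W\|}\max_j|w_j|$ by direct integration (your $\kappa^2\tau^2$ bound on the squared increment is in fact stronger than the paper's $C\tau$). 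Your route is more elementary and more self-contained: it \emph{derives} the boundedness of the attention kernel that the paper's argument silently presupposes, and it makes explicit why an exponential kernel is admissible here at all. What the paper's Gr\"onwall-type argument buys in exchange is robustness: it would survive if a diffusion term $\sigma\,\mathrm{d}W^i$ were reinstated or if the dynamics did not exactly preserve the sphere, matching the generality of Lemma~\ref{lem:Jt upper bd}. Two small caveats for your version: the identity $\mathbb{E}|X^i(t)|^q=1$ needs the initial tokens to lie exactly on $S^{d-1}$ (the paper assumes this via layer normalization but does not state it as a hypothesis of the lemma), and you should say a word about why the full coupled system admits a unique global solution so that the invariance argument applies; both are minor.
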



\begin{proof}
For our process equation 
\begin{equation*}
    \mathrm{d} X^i = \dfrac{1}{N-1} \sum_{j:j \ne i}K_{ij}(X^i,X^j) \mathrm{d}t,
\end{equation*}
we can obtain the derivative of the $q$-moment
\begin{equation*}
    \dfrac{\mathrm{d}}{\mathrm{d}t} \mathbb{E}(|X^i|^q) = \dfrac{q}{N-1} \mathbb{E} (|X^i|^{q-2}) (\sum_{j:j \ne i}K(X^i,X^j)\cdot X^i).
\end{equation*}
So 
\begin{equation*}
    \dfrac{\mathrm{d}}{\mathrm{d}t} \mathbb{E}(|X^i|^q) \le C\dfrac{q}{N-1} \mathbb{E}(|X^i|^{q-1}).
\end{equation*}
By Young's inequality,
\begin{equation*}
    \mathbb{E} (|X^i|^{q-1}) \le \dfrac{(q-1)\nu}{q} \mathbb{E}(|X^i|^q) + \dfrac{1}{q \nu^{q-1}}.
\end{equation*}
Choosing fixed $\nu$, there will be
\begin{equation*}
    \dfrac{\mathrm{d}}{\mathrm{d}t} \mathbb{E}(|X^i|^q) \le r_1 \dfrac{q}{N-1} \mathbb{E}(|X^i|^q) + r_2.
\end{equation*}
Hence, 
\begin{equation*}
    \sup_{t \in [t_k,t_{k-1})} \mathbb{E}(|X^i|^q) \le C_1 |X^i(t_{k-1})|^q + C_2.
\end{equation*}
And then, because $X^i$ is on the unit sphere, $\mathbb{E}(|X^i|^q)$ is uniformly bounded in time with any $q \ge 2$. 

For $\tilde{X}$,
\begin{equation*}
    \dfrac{\mathrm{d}}{\mathrm{d}t} \mathbb{E}(|\tilde{X}^i|^q|\mathcal{F}_{k-1}) = \dfrac{q}{N-1} \mathbb{E} (|\tilde{X}^i|^{q-2}|\mathcal{F}_{k-1}) (\sum_{j:j \ne i}K(\tilde{X}^i,\tilde{X}^j)\cdot \tilde{X}^i).
\end{equation*}
Similarly, we can obtain that
\begin{equation*}
    \dfrac{\mathrm{d}}{\mathrm{d}t} \mathbb{E}(|\tilde{X}^i|^q|\mathcal{F}_{k-1}) \le r_1 \dfrac{q}{N-1} \mathbb{E}(|\tilde{X}^i|^q | \mathcal{F}_{k-1}) + r_2.
\end{equation*}
So 
\begin{equation}\label{eq18}
    \sup_{t \in [t_k,t_{k-1})} \mathbb{E}(|\tilde{X}^i|^q | \mathcal{F}_{k-1}) \le C_1|\tilde{X}^i(t_{k-1})|^q+C_2.
\end{equation}
Taking expectation about the randomness in $\mathcal{F}_{k-1}$ on both sides of  \eqref{eq18}, we can get a similar control for $\tilde{X}^i$ as that for $X^i$:
\begin{equation*}
    \sup_{t \in [t_k,t_{k-1})} \mathbb{E}(|\tilde{X}^i|^q) \le C_1|\tilde{X}^i(t_{k-1})|^q+C_2.
\end{equation*}
And then $\mathbb{E}(|\tilde{X}^i|^q)$ is uniformly bounded in time with any $q \ge 2$.

Therefore, \eqref{eq16} follows. (\eqref{eq16} holds obviously when $q=0$ or $q=1$.)

Then we prove \eqref{eq17}.
Note that the process of $\tilde{X}^i$ is 
\begin{equation*}
    \dfrac{\mathrm{d}}{\mathrm{d}t}\tilde{X}^i=\dfrac{1}{p-1} \sum_{j \in \mathcal{C}_i,j \ne i}K_{ij}(\tilde{X}^i,\tilde{X}^j) .
\end{equation*}
Then
\begin{align*}
\mathbb{E}(\tilde{X}^i(t)-\tilde{X}^i(t_{k-1})|\mathcal{F}_{k-1}) 
& = \int_{t_{k-1}}^t \mathbb{E} \left (\dfrac{1}{p-1} \sum_{j \in \mathcal{C}_i,j \ne i}K_{ij}(\tilde{X}^i(s),\tilde{X}^j(s)) \Big| \mathcal{F}_{k-1} \right ) ds \\
& \le C\tau \le C(1+|\tilde{X}^i(t_{k-1})|)\tau.
\end{align*}
 
By It$\hat{\mathrm{o}}$ 's formula, 
\begin{align*}
& \dfrac{\mathrm{d}}{\mathrm{d}t}\mathbb{E}\left ( |\tilde{X}^i(t)-\tilde{X}^i(t_{k-1})|^2 \big\vert \mathcal{F}_{k-1} \right )  \\ =\
& 2 \mathbb{E}\left ( (\tilde{X}^i(t)-\tilde{X}^i(t_{k-1})) \cdot \dfrac{1}{p-1}\sum_{j \in \mathcal{C}_i,j \ne i}K_{ij}(\tilde{X}^i,\tilde{X}^j)\big\vert \mathcal{F}_{k-1} \right ) \\ =\
& 2 \mathbb{E}\left ( (\tilde{X}^i(t)-\tilde{X}^i(t_{k-1})) \cdot \dfrac{1}{p-1}\sum_{j \in \mathcal{C}_i,j \ne i}(K_{ij} (\tilde{X}^i(t),\tilde{X}^j(t))-K_{ij} (\tilde{X}^i(t_{k-1}),\tilde{X}^j(t_{k-1})))\big\vert \mathcal{F}_{k-1} \right ) \\ 
& + 2 \mathbb{E}\left ( (\tilde{X}^i(t)-\tilde{X}^i(t_{k-1})) \cdot \dfrac{1}{p-1}\sum_{j \in \mathcal{C}_i,j \ne i}K_{ij}(\tilde{X}^i(t_{k-1}),\tilde{X}^j(t_{k-1}))\big\vert \mathcal{F}_{k-1} \right ) \\ \le \
& 2\mathbb{E}\left ( |\tilde{X}^i(t)-\tilde{X}^i(t_{k-1})|^2 \big\vert \mathcal{F}_{k-1} \right ) + C \mathbb{E}\left ( |\tilde{X}^i(t)-\tilde{X}^i(t_{k-1})| \big\vert \mathcal{F}_{k-1} \right ) \\ \le \
& 2\mathbb{E}\left ( |\tilde{X}^i(t)-\tilde{X}^i(t_{k-1})|^2 \big\vert \mathcal{F}_{k-1} \right ) + C (1+|\tilde{X}^i(t_{k-1})|^q),
\end{align*} 
where the second last inequality holds because $\left \{K_{ij} \right \}$ and their derivatives are uniformly bounded.
Hence, the second inequality of \eqref{eq17} follows.
\end{proof}

\begin{lemma} \label{lemma5}
    Fix $i \in {1,\cdots,N}$. Let $\mathcal{C}_i$ be the random batch of size $p$ that containing particle $i$ in the random division. Let $Y_j$ be $N$ random variables or random vectors that are independent of $\mathcal{C}_i$. Then, for every $p \ge 2$, it holds that
\begin{equation}\label{eq19}
    \left \| \dfrac{1}{p-1} \sum_{j \in \mathcal{C}_i, j \ne i} Y_j   \right \|^2 \le \dfrac{1}{N-1}\sum_{j:j \ne i} \left \| Y_j \right \|^2.
\end{equation}
\end{lemma}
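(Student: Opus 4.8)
The plan is to expand the squared norm with the indicator variables $I_{ij}$ from Lemma~\ref{lemma2} and then separate the randomness of the batch division from that of the $Y_j$. First I would write $\sum_{j \in \mathcal{C}_i, j \ne i} Y_j = \sum_{j \ne i} I_{ij} Y_j$, where $I_{ij}=1$ iff $i$ and $j$ lie in the same batch, so that, since $\|\cdot\|^2=\mathbb{E}|\cdot|^2$,
\begin{equation*}
    \left\| \dfrac{1}{p-1} \sum_{j \in \mathcal{C}_i, j \ne i} Y_j \right\|^2 = \dfrac{1}{(p-1)^2} \mathbb{E}\left[ \sum_{j,k \ne i} I_{ij} I_{ik}\, Y_j \cdot Y_k \right].
\end{equation*}
Because $\{Y_j\}$ is independent of the random division, each term factorizes as $\mathbb{E}(I_{ij}I_{ik})\,\mathbb{E}(Y_j \cdot Y_k)$. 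Splitting into the diagonal part ($j=k$, where $I_{ij}^2=I_{ij}$) and the off-diagonal part ($j \ne k$), and inserting the two identities $\mathbb{E}I_{ij}=\frac{p-1}{N-1}$ and $\mathbb{E}(I_{ij}I_{ik})=\frac{(p-1)(p-2)}{(N-1)(N-2)}$ from Lemma~\ref{lemma2}, the left-hand side becomes
\begin{equation*}
    \dfrac{T}{(p-1)(N-1)} + \dfrac{p-2}{(p-1)(N-1)(N-2)}\, Q,
\end{equation*}
where I abbreviate $T = \sum_{j \ne i} \|Y_j\|^2$ and $Q = \sum_{j \ne k,\, j,k \ne i} \mathbb{E}(Y_j \cdot Y_k)$.

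Next I would reduce the target to a single clean statement about $Q$. Multiplying the desired bound $\frac{T}{N-1}$ through by $N-1$, rearranging, and using $p \ge 2$ so that $p-2 \ge 0$, the claim is seen to be equivalent to $Q \le (N-2)\,T$ (the case $p=2$ is immediate, since the off-diagonal coefficient vanishes and equality holds). The useful observation here is that $Q$ is exactly the off-diagonal part of a genuine second moment:
\begin{equation*}
    Q = \mathbb{E}\left[\left|\sum_{j \ne i} Y_j\right|^2\right] - \sum_{j \ne i} \mathbb{E}|Y_j|^2 = \mathbb{E}\left[\left|\sum_{j \ne i} Y_j\right|^2\right] - T.
\end{equation*}

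Finally, the main step is to control this second moment by Cauchy--Schwarz. Applied to the $N-1$ summands it gives the pointwise bound $|\sum_{j\ne i} Y_j|^2 \le (N-1)\sum_{j\ne i} |Y_j|^2$; taking expectations yields $\mathbb{E}[|\sum_{j\ne i}Y_j|^2] \le (N-1)T$, hence $Q \le (N-2)T$, which is precisely what is needed. I expect the only genuine obstacle to be the sign-indefinite cross terms $\mathbb{E}(Y_j \cdot Y_k)$: a term-by-term estimate fails because these can be positive, and the resolution is to recognise their sum as $\mathbb{E}[|\sum_{j\ne i}Y_j|^2]-T$ and bound it globally rather than termwise. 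This reflects the familiar principle that a batch average taken by sampling without replacement carries no more second-moment weight than the underlying per-particle energy.
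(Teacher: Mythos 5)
Your proof is correct, and its skeleton --- expanding $\sum_{j\in\mathcal C_i,j\ne i}Y_j=\sum_{j\ne i}I_{ij}Y_j$, factorizing $\mathbb{E}(I_{ij}I_{ik})\mathbb{E}(Y_j\cdot Y_k)$ by independence, and inserting the two identities from Lemma~\ref{lemma2} --- is exactly the paper's. The only divergence is in how the off-diagonal sum $Q=\sum_{j\ne k,\,j,k\ne i}\mathbb{E}(Y_j\cdot Y_k)$ is controlled: you aggregate it into the complete square $\mathbb{E}\bigl|\sum_{j\ne i}Y_j\bigr|^2-T$ and apply Cauchy--Schwarz once to the whole sum, whereas the paper estimates each cross term separately via $\mathbb{E}(Y_j\cdot Y_k)\le\|Y_j\|\,\|Y_k\|\le\tfrac12\|Y_j\|^2+\tfrac12\|Y_k\|^2$ and then counts the $(N-2)$ occurrences of each index. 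Both routes give $Q\le(N-2)T$ and hence the same sharp constant $\tfrac1{N-1}$ (with equality when all $Y_j$ are identical), so neither buys anything quantitative over the other; yours is arguably cleaner in that it isolates the single inequality $Q\le(N-2)T$ as the crux and handles $p=2$ explicitly. One correction to your closing commentary: the term-by-term estimate does \emph{not} fail. Bounding each $\mathbb{E}(Y_j\cdot Y_k)$ by $\|Y_j\|\,\|Y_k\|$ and then by the arithmetic mean of squares yields precisely $(N-2)T$, which is what the paper does; the positivity of the cross terms is harmless because Cauchy--Schwarz already absorbs the worst case. Your global completion-of-the-square argument is an equivalent, not a necessary, resolution.
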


\begin{proof}
\begin{align*}
\left \| \dfrac{1}{p-1} \sum_{j \in \mathcal{C}_i, j \ne i} Y_j   \right \|^2
& = \dfrac{1}{(p-1)^2} \mathbb{E}\left\{(\sum_{j:j \ne i}I_{ij}Y_j)^2\right\} \\
& = \dfrac{1}{(p-1)^2} \sum_{j,k:j \ne i,k \ne i}\mathbb{E}(I_{ij}I_{ik})\mathbb{E}(Y_jY_k) \\
& \le \dfrac{1}{(p-1)^2} (\dfrac{p-1}{N-1}  \sum_{j:j \ne i}||Y_j||^2 \\
& \ \ \ + \dfrac{(p-1)(p-2)}{(N-1)(N-2)} \sum_{j,k:j \ne i,k \ne i, j \ne k} ||Y_j|| ||Y_k||) \\
& \le \dfrac{p-2}{(p-1)(N-1)(N-2)} \sum_{j,k:j \ne i,k \ne i,j \ne k}(\frac{1}{2}||Y_j||^2+\frac{1}{2}||Y_k||^2) \\
& \ \ \ + \dfrac{1}{(p-1)(N-1)} \sum_{j:j \ne i}||Y_j||^2 \\
& = (\dfrac{p-2}{(p-1)(N-1)} + \dfrac{1}{(p-1)(N-1)}) \sum_{j:j \ne i}||Y_j||^2\\
& = \dfrac{1}{N-1} \sum_{j:j \ne i}||Y_j||^2.
\end{align*}
So \eqref{eq19} holds.
The second equality is because of the independence and the first inequality is the result of Cauchy inequality and \labelcref{lemma2}.
\end{proof}

Now we calculate for the process of $Z^i$.
Define 
\begin{equation*}
    \delta K_{ij}(t):=K_{ij}(\tilde{X}^i(t),\tilde{X}^j(t))-K_{ij}(X^i(t),X^j(t)).
\end{equation*}
There will be
\begin{align*}
    \dfrac{\mathrm{d}}{\mathrm{d}t}Z^i
    & = \dfrac{1}{p-1}\sum_{j \in \mathcal{C}_i,j \ne i}K_{ij}(\tilde{X}^i,\tilde{X}^j)-\dfrac{1}{N-1}\sum_{j:j \ne i}K_{ij}(X^i,X^j) \\
    & = \dfrac{1}{N-1}\sum_{j:j \ne i}\delta K_{ij}+\chi_i(\tilde{X}) \\
    & = \dfrac{1}{p-1}\sum_{j \in \mathcal{C}_i,j \ne i}\delta K_{ij}+\chi_i(X).
\end{align*}

\begin{lemma} \label{lemma6}
For $t \in [t_{k-1},t_k)$, it holds that
\begin{equation}\label{eq20}
    \| Z^i(t)-Z^i(t_{k-1})\| \le C\tau,
\end{equation}
and almost surely that
\begin{equation}\label{eq21}
    |Z^i(t)-Z^i(t_{k-1})| \le C\tau.
\end{equation}
\end{lemma}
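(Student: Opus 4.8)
The plan is to exploit the fact that both solution processes in \eqref{theorem1.1} and \eqref{theorem1.2} are driven by deterministic ODEs on each subinterval $[t_{k-1},t_k)$ (the randomness enters only through the fixed batch division $\mathcal{C}_i$), so $Z^i=\tilde{X}^i-X^i$ is absolutely continuous in $t$ and can be written as
\begin{equation*}
    Z^i(t)-Z^i(t_{k-1}) = \int_{t_{k-1}}^t \frac{\mathrm{d}}{\mathrm{d}s}Z^i(s)\,\mathrm{d}s.
\end{equation*}
The whole proof then reduces to a uniform pointwise bound on the integrand, after which I integrate over an interval of length at most $\tau$.

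First I would invoke the representation of $\frac{\mathrm{d}}{\mathrm{d}t}Z^i$ established just before the lemma, namely
\begin{equation*}
    \frac{\mathrm{d}}{\mathrm{d}t}Z^i = \frac{1}{p-1}\sum_{j \in \mathcal{C}_i,j \ne i}K_{ij}(\tilde{X}^i,\tilde{X}^j) - \frac{1}{N-1}\sum_{j:j \ne i}K_{ij}(X^i,X^j).
\end{equation*}
Condition (4) of Lemma~\ref{lem:Jt upper bd} guarantees a uniform bound $\sup_{i,j}|K_{ij}|\le C_K$. The crucial observation is that the normalizing factors $\tfrac{1}{p-1}$ and $\tfrac{1}{N-1}$ exactly match the numbers of summands ($p-1$ and $N-1$ respectively), so each average is a convex combination of vectors of norm at most $C_K$. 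Hence each of the two sums is bounded by $C_K$ irrespective of $N$ and $p$, and by the triangle inequality
\begin{equation*}
    \left|\frac{\mathrm{d}}{\mathrm{d}t}Z^i\right| \le 2C_K =: C \quad \text{almost surely.}
\end{equation*}

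Integrating this almost-sure bound over $[t_{k-1},t)$ and using $t-t_{k-1}<\tau$ yields
\begin{equation*}
    |Z^i(t)-Z^i(t_{k-1})| \le C\,(t-t_{k-1}) \le C\tau \quad \text{a.s.,}
\end{equation*}
which is precisely \eqref{eq21}. Finally, to obtain the norm bound \eqref{eq20}, I simply note that squaring, taking expectation, and using the almost-sure bound give $\mathbb{E}(|Z^i(t)-Z^i(t_{k-1})|^2)\le C^2\tau^2$, so that $\|Z^i(t)-Z^i(t_{k-1})\|=\sqrt{\mathbb{E}(|Z^i(t)-Z^i(t_{k-1})|^2)}\le C\tau$. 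There is no real obstacle here: the only subtlety worth stating carefully is that the matching of normalization constants to the number of terms keeps the averages bounded uniformly in $N$ and $p$, so the uniform boundedness hypothesis on the interaction kernels transfers directly into a uniform bound on the drift; everything else is an elementary integration and the passage from the pathwise estimate to the $L^2$ estimate.
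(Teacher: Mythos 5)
Your proof is correct and follows essentially the same route as the paper's, which simply notes that the boundedness of $K_{ij}$ gives $\bigl|\frac{\mathrm{d}}{\mathrm{d}t}Z^i\bigr| \le C$ almost surely and that \eqref{eq20} is a corollary of \eqref{eq21}. You merely spell out the details the paper leaves implicit (the convex-combination structure of the two averages and the passage from the pathwise bound to the $L^2$ bound), all of which are sound.
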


\begin{proof}
Because $K_{ij}$ is bounded, $\dfrac{\mathrm{d}}{\mathrm{d}t}Z^i(t) \le C$, which gives \eqref{eq21}.
And \eqref{eq20} is a simple corollary.
\end{proof}

In the following, we prove the convergence property in \labelcref{theorem}.

\textbf{Proof of Theorem~\ref{theorem}}.
Firstly, we let $M_0, M_1$ be the upper bound of $K_{ij}$ and their derivative respectively.
\begin{equation}\label{prooftheorem1}
    \dfrac{\mathrm{d}J(t)}{\mathrm{d}t}=\dfrac{1}{N} \sum_{i=1}^N \mathbb{E}\left\{Z^i \cdot (F_i((\tilde{X})-F_i(X))+\chi_i(\tilde{X}))\right\}.
\end{equation}
Due to the boundedness of $K_{ij}$, we can control the first term of \eqref{prooftheorem1} 
\begin{align*}
    \dfrac{1}{N} \sum_{i=1}^N \mathbb{E}\left\{Z^i \cdot \left(F_i(\tilde{X})-F_i(X) \right)\right\}
    & = \dfrac{1}{N} \sum_{i=1}^N \mathbb{E}\left\{Z^i \cdot \sum_{j \ne i}K_{ij}(X^i,X^j)\right\}\\
    & \le \dfrac{1}{N} \sum_{i=1}^N\mathbb{E}(|Z^i|^2+|Z^i||Z^j|)M_1 \\
    & \le CJ(t).
\end{align*}
Now, we consider the second term, and rewrite it into two parts as 
\begin{align*}
    \dfrac{1}{N} \sum_{i=1}^N \mathbb{E}\left \{Z^i(t) \cdot \chi_i(X(t))\right \} 
    & = \dfrac{1}{N} \sum_{i=1}^N \mathbb{E} \left\{Z^i(t_{k-1}) \cdot \chi_i(X(t)) \right\} \\
    & \ \ \ + \dfrac{1}{N} \sum_{i=1}^N \mathbb{E} \left\{ \left(Z^i(t)-Z^i(t_{k-1})\right) \cdot \chi_i(X(t)) \right\} \\
    & =: I_1 + I_2.
\end{align*}
We now estimate $I_1$.
Since 
\begin{equation*}
    \mathbb{E}\left\{Z^i(t_{k-1}) \cdot \chi_i(\tilde{X}(t_{k-1}))\right\}=
    \mathbb{E}\left\{Z^i(t_{k-1}) \cdot \mathbb{E}(\chi_i(\tilde{X}(t_{k-1})|\mathcal{G}_{k-1})\right\} = \mathbb{E}0=0,
\end{equation*}
we can rewrite $I_1$ as
\begin{equation*}
    I_1=\dfrac{1}{N} \sum_{i=1}^N \mathbb{E} \left\{ Z^i(t_{k-1}) \cdot \bigl(\chi_i(\tilde{X}(t))-\chi_i(\tilde{X}(t_{k-1}))\bigr) \right\}.
\end{equation*}
Note that 
\begin{align*}
    & \mathbb{E} \left\{ Z^i(t_{k-1}) \cdot \left(\chi_i(\tilde{X}(t))-\chi_i(\tilde{X}(t_{k-1}))\right) \right\} \\
  &=  \mathbb{E} \left\{Z^i(t_{k-1}) \cdot \mathbb{E}\left(\chi_i(\tilde{X}(t))-\chi_i(\tilde{X}(t_{k-1}))\big\vert\mathcal{F}_{k-1}\right) \right\} \\
  &\le  C \|Z^i(t_{k-1})\| \left \| \mathbb{E} \left( \chi_i(\tilde{X}(t))-\chi_i(\tilde{X}(t_{k-1}))\big\vert \mathcal{F}_{k-1} \right) \right \| .
\end{align*}
For $t \in [t_{k-1},t_k)$, denote by
\begin{equation*}
\begin{aligned}
    \delta \tilde{K}_{ij}(s) &:= K_{ij}(\tilde{X}^i(s),\tilde{X}^j(s))-K_{ij}(\tilde{X}^i(t_{k-1}),\tilde{X}^j(t_{k-1})), \\
    \delta \tilde{X}_{j}(s) &:= \tilde{X}^j(s) - \tilde{X}^j(t_{k-1}).
\end{aligned}
\end{equation*}

Then there will be
\begin{equation*}
    \mathbb{E} \left( \chi_i(\tilde{X}(t))-\chi_i(\tilde{X}(t_{k-1}))\big\vert \mathcal{F}_{k-1} \right) = \dfrac{1}{p-1}\sum_{j \in \mathcal{C}_i,j \ne i}\mathbb{E}(\delta \tilde{K}_{ij} | \mathcal{F}_{k-1}) - \dfrac{1}{N-1} \sum_{j:j \ne i} \mathbb{E}(\delta \tilde{K}_{ij} | \mathcal{F}_{k-1}).
\end{equation*}
Performing Taylor expansion of $\delta \tilde{K}_{ij}$ around the point $t_{k-1}$ to the second order, 
\begin{equation*}
    \delta \tilde{K}_{ij} = \nabla_{x_i} K_{ij}|_{t_{k-1}} \cdot \delta \tilde{X}^i + \nabla_{x_j} K_{ij}|_{t_{k-1}} \cdot \delta \tilde{X}^j + \dfrac{1}{2} M_2 (\delta \tilde{X}^i+\delta \tilde{X}^j) \otimes (\delta \tilde{X}^i+\delta \tilde{X}^j)\cdot\mathbf{1}_d,
\end{equation*}
where $M_2$ is a random variable bounded by $||\nabla^2 K_{ij}||_\infty$. 
Then, by Lemma~\ref{lemma4},
\begin{equation*}
    |\mathbb{E}(\delta \tilde{K}_{ij}|\mathcal{F}_{k-1})| \le C(1+|\tilde{X}^i(t_{k-1})|^q+|\tilde{X}^j(t_{k-1})|^q) \tau.
\end{equation*}

Using Lemma~\ref{lemma5}, we obtain
\begin{equation*}
    \left \| \dfrac{1}{p-1} \sum_{j \in \mathcal{C}_i,j \ne i} \mathbb{E}(\delta \tilde{K}_{ij}|\mathcal{F}_{k-1})  \right \| \le C\tau \left(1+\sqrt{\dfrac{1}{N-1} \sum_{j:j \ne i} \left\| | \tilde{X}^j(t_{k-1})|^{q'} \right\|^2}\right) \le C\tau.
\end{equation*}
And we can derive more easily that
\begin{equation*}
    \left \| \dfrac{1}{N-1} \sum_{j:j \ne i} \mathbb{E}(\delta \tilde{K}_{ij}|\mathcal{F}_{k-1})  \right \| \le C\tau.
\end{equation*}

Therefore, by Lemma~\ref{lemma4},
\begin{equation*}
    \mathbb{E}\left(\chi_i(\tilde{X}(t))-\chi_i(\tilde{X}(t_{k-1}))\big\vert \mathcal{F}_{k-1}\right) \le C
\|Z^i(t_{k-1})\|\tau \le C\|Z^i(t)\|\tau + C\tau^2.
\end{equation*}

Then, by mean value inequality,
\begin{equation*}
    I_1 \le \dfrac{C}{N} \sum_{i=1}^N \|Z^i(t)\|\tau + C\tau^2 \le \eta \dfrac{1}{N} \sum_{i=1}^N \|Z^i(t)\|^2 + C(\eta)\tau^2,
\end{equation*}
where $\eta$ is a constant indpendent of time and $C(\eta)$ is another constant dependent of $\eta$.

Now we estimate $I_2$.
Decompose $I_2$ into
\begin{align*}
    I_2
     = &\dfrac{1}{N} \sum_{i=1}^N \mathbb{E} \left\{ \left(Z^i(t)-Z^i(t_{k-1})\right) \cdot \left(\chi_i(\tilde{X}(t))-\chi_i(X(t))\right) \right\} \\
     & + \dfrac{1}{N} \sum_{i=1}^N \mathbb{E} \left\{ \left(Z^i(t)-Z^i(t_{k-1})\right) \cdot \chi_i(X(t)) \right\} \\
    =&: I_{21} + I_{22}.
\end{align*}
Consider controlling $I_{21}$. Obviously, 
\begin{equation*}
    I_{21} \le \dfrac{1}{N} C\tau \|\chi_i(\tilde{X}(t))-\chi_i(X(t))\|.
\end{equation*}

Note that
\begin{equation} \label{prooftheorem2}
    \|\chi_i(\tilde{X}(t))-\chi_i(X(t))\| \le \left \| \dfrac{1}{p-1} \sum_{j \in \mathcal{C}_i,j \ne i}\delta K_{ij}(t) \right \| + \left \| \dfrac{1}{N-1} \sum_{j:j \ne i}\delta K_{ij}(t) \right \|.
\end{equation}
Because the derivative of $K_{ij}$ is uniformly bounded, there exists a Lipschitz constant $L$ (independent of $i,j$) that
\begin{equation*}
    |\delta K_{ij}(t)| \le L(|Z^i(t)|+|Z^j(t)|).
\end{equation*}

Hence,
\begin{equation}\label{eq22}
    \left \| \dfrac{1}{p-1} \sum_{j \in \mathcal{C}_i,j \ne i}\delta K_{ij}(t) \right \| \le L \left (\|Z^i(t)\| + \left \| \dfrac{1}{p-1} \sum_{j \in \mathcal{C}_i,j \ne i}|Z^j(t)| \right \| \right). 
\end{equation}
By Lemma~\ref{lemma6},
\begin{equation*}
    |Z^i(t)| \le |Z^i(t_{k-1})| + C\tau.
\end{equation*}
Since $Z^i(t_{k-1})$ is dependent of $\mathcal{C}_i$, by Lemma~\ref{lemma5} and Lemma \labelcref{lemma6}, as well as Cauchy inequality, 
\begin{equation} \label{eq23}
    \begin{aligned} 
&  \left \| \dfrac{1}{p-1} \sum_{j \in \mathcal{C}_i,j \ne i}|Z^j(t)| \right \|  \\
\le \ &  \sqrt{\dfrac{1}{N-1}\sum_{j:j \ne i} \| Z^j(t) \|^2} \\
\le \ &  \sqrt{\dfrac{1}{N-1}\sum_{j:j \ne i} (\| Z^j(t_{k-1}) \|+C\tau)^2} \\
= \ &  \ \sqrt{\dfrac{1}{N-1}\sum_{j:j \ne i} \| Z^j(t_{k-1}) \|^2+2C\tau\dfrac{1}{N-1}\sum_{j:j \ne i} \| Z^j(t_{k-1}) \|+C^2\tau^2}                                         \\
\le \  & \ \sqrt{\dfrac{1}{N-1}\sum_{j:j \ne i} \| Z^j(t_{k-1}) \|^2+2C\tau\sqrt{\dfrac{1}{N-1}\sum_{j:j \ne i} \| Z^j(t_{k-1}) \|^2}+C^2\tau^2}   \\
= \ &  \sqrt{\dfrac{1}{N-1} \sum_{j:j \ne i} \|Z^j(t_{k-1})\|^2} + C\tau \ .
\end{aligned}
\end{equation}

By mean value inequality, we obtain
\begin{align*}
    \dfrac{1}{N}\sum_{i=1}^N \tau \sqrt{\dfrac{1}{N-1} \sum_{j:j \ne i} \|Z^j(t_{k-1})\|^2} 
    & \le \eta \dfrac{1}{N} \sum_{i=1}^N \|Z^i(t_{k-1})\|^2 +C(\eta) \tau^2 \\
    & \le \eta \dfrac{1}{N} \sum_{i=1}^N (\|Z^i(t)\|+C\tau)^2 +C(\eta) \tau^2 \\
    & \le \eta \dfrac{1}{N} \sum_{i=1}^N 2(\|Z^i(t)\|^2+C^2\tau^2)+C(\eta) \tau^2 ,
\end{align*}
and
\begin{equation*}
    \dfrac{1}{N}\sum_{i=1}^N \tau \|Z^i(t)\| \le  \eta \dfrac{1}{N} \sum_{i=1}^N \|Z^i(t)\|^2 +C(\eta) \tau^2.
\end{equation*}
Thus, the first term of \eqref{prooftheorem2} is controlled by $\eta \dfrac{1}{N} \sum_{i=1}^N \|Z^i(t)\|^2 + C(\eta)\tau^2$. It is easier to prove that the second term of \eqref{prooftheorem2}  is controlled by $\eta \dfrac{1}{N} \sum_{i=1}^N \|Z^i(t)\|^2 + C(\eta)\tau^2$.

Thus,
\begin{equation*}
    I_{21} \le \eta \dfrac{1}{N} \sum_{i=1}^N ||Z^i(t)||^2 + C(\eta)\tau^2.
\end{equation*}

Finally, we estimate $I_{22}$.
Recall that 
\begin{equation}\label{eq24}
    \dfrac{\mathrm{d}}{\mathrm{d}t}Z^i(t) = \dfrac{1}{p-1} \sum_{j \in \mathcal{C}_i,j \ne i}\delta K_{ij}(t) + \chi_i(X(t)).
\end{equation}
Integrating \eqref{eq24} in time over $[t_{k-1},t)$, then dotting with $\chi_i(X(t))$, taking expectation, we derive
\begin{equation}\label{eq25}
\begin{aligned}
    &\bigl|\mathbb{E} \left\{ (Z^i(t)-Z^i(t_{k-1})) \cdot \chi_i(X(t)) \right\}\bigr| \le \\
    &\int_{t_{k-1}}^t \mathbb{E} \left\{\left( \dfrac{1}{p-1}\sum_{j \in \mathcal{C}_i,j \ne i}\delta K_{ij}(s)\right) \cdot \chi_i(X(t)) \right \} {\rm d}s + \mathbb{E} \left\{ \int_{t_{k-1}}^t \chi_i(X(s)) \cdot \chi_i(X(t)) {\rm d}s \right\}. 
\end{aligned}
\end{equation}
By the estimation of \eqref{eq22} and \eqref{eq23}, the second term on the right hand of \eqref{eq25} is bounded by $C\sqrt{\dfrac{1}{N-1}\sum_{j:j \ne i}\|Z^j(t_{k-1})\|^2}\tau+C\tau^2$. Similarly as the estimation of $I_{21}$, this is less than $\eta \dfrac{1}{N} \sum_{i=1}^N \|Z^i(t)\|^2 + C(\eta)\tau^2$. 

By Lemma~\ref{lemma3}, the first term on the right hand of \eqref{eq25} is bounded by $\bigl(\dfrac{1}{p-1}-\dfrac{1}{N-1}\bigr)\|\Lambda_i\|_\infty \tau$.

Therefore, 
\begin{equation*}
    I_{22} \le \eta \dfrac{1}{N} \sum_{i=1}^N \|Z^i(t_{k-1})\|^2 + C(\eta) \tau^2 + \Bigl(\dfrac{1}{p-1}-\dfrac{1}{N-1}\Bigr)\|\Lambda_i\|_\infty \tau.
\end{equation*}
Note that $\|Z^i(t_{k-1})\|^2 \le (\|Z^i(t)\|^2+C\tau)^2 \le 2\|Z^i(t)\|+2C^2\tau^2$.
Then,
\begin{equation*}
    I_{22} \le \ \dfrac{1}{N} \sum_{i=1}^N \|Z^i(t)\|^2 + C(\eta)\tau^2 + \Bigl(\dfrac{1}{p-1}-\dfrac{1}{N-1}\Bigr)\|\Lambda_i\|_\infty \tau. 
\end{equation*}

Above all, we have an estimate that
\begin{equation}
    \dfrac{\mathrm{d}}{\mathrm{d}t}J(t) \le J(t) + C(\eta)\tau^2 + \Bigl(\dfrac{1}{p-1}-\dfrac{1}{N-1}\Bigr) \|\Lambda_i\|_\infty \tau.
\end{equation}
By Gr$\mathrm{\ddot{o}}$nwall's lemma, Theorem~\ref{} follows.
\hfill $\blacksquare$

\section{Experiments on graph transformers}
In graph processing, especially for node classification tasks, since the input to neural networks is always a complete graph or its subgraph, the data lacks a batch dimension. Therefore, we can effectively leverage the parallel advantages of RBA by applying RBA to the graph transformer, thereby enhancing the model's practicality.
Our strategy involves replacing the self-attention mechanism in existing models with RBA, followed by retraining and inference of the model on the same dataset. We call the new architecture with RBA as RBTransformers.

In this section, we will in two aspects present experimental results that RBTransformers perform better than the corresponding graph transformers in several datasets. One is about the expressivity, that is experimental accuracy reflected in specific downstream tasks. The other is about practicability, that is time saving and memory saving. 

\subsection{Expressivity}
For RBTransformer based on SGFormer, we do node classification tasks on datasets ogbn-arxiv, pokec and ogbn-papers100M. 

We make a brief introduction of these datasets about their numbers of nodes, edges, features, classes, and metric for evaluation, shown in Table\labelcref{dataset}.

\begin{table*}[!htbp]
  \centering
  \caption{Statistics summary of the large-sized graph datasets.}
  \label{dataset}
  \centering
  	\begin{tabular*}{1.05\linewidth}{@{\extracolsep{\fill}\,}lccccc}
 	\toprule
        Dataset  &\#Nodes &\#Edges &\#Features  &\#Classes &Metric \\
        \midrule
        ogbn-arxiv &169,343 &1,166,243 &8 &40 &Accuracy \\
        pokec &1,632,803 & 30,622,564 & 65 & 2 &Accuracy\\
        ogbn-papers100M &110,059,956 &1,615,685,872 &128 &172 &Accuracy \\
    \bottomrule
    \end{tabular*}
\end{table*}

We do the expressivity experiments on ogbn-arxiv and pokec dataset. To indicate our superiority, we make a comparison with the model SGFormer, a powerful graph transformer provided in 2023.\cite{wu2024simplifying}

As shown in Table \labelcref{accuracy}, the accuracy of models with our RBA has little difference with theorigin self-attention mechanism.

\begin{table*}[!htbp]
  \centering
  \caption{Accuracy comparison of the large-sized graph datasets.}
  \label{accuracy}
  \centering
  	\begin{tabular*}{0.92\linewidth}{@{\extracolsep{\fill}\,}lccccc}
 	\toprule
        Dataset  &\#Origin &\#Ours  \\
        \midrule
        ogbn-arxiv &72.63 ± 0.13 &72.90 ± 0.20  \\
        pokec &73.76 ± 0.24 & 75.13 ± 0.14	 \\
    \bottomrule
    \end{tabular*}
\end{table*}

\subsection{Practicability}
We want to compare the propagation time of of the self-attention part of  RBTransformers through doing node-level downstream tasks. To be mentioned, applying the parallelism of the RBA, we run our codes on multiple devices. Unfortunately, due to the communication time between devices, we fail to verify that the self-attention part of RBTransformers is faster than that of origin Transformers.

We also do tasks on huge large dataset ogbn-papers100M, which is easily out of memory with normal graph neural networks and graph transformers.
We run the codes on multiple devices and complete them successfully, which maybe out of memory by the origin method, and record the memory usage on the primary device and other assistant devices. The result is shown in \labelcref{papers100M}.

\begin{table*}[!h]
  \centering
  \caption{Memory usage on ogbn-papers100M}
  \label{papers100M}

  \bigskip
  \centering batch size 1000 \\[0.2em]
  	\begin{tabular*}{1.00\linewidth}{@{\extracolsep{\fill}\,}lccccc}
 	\toprule
        num of devices  &\#8 &\#4 &\#2 &\#1 \\
        \midrule 
        memory usage on primary device &23760MiB & 22450MiB&22280MiB &OOM \\
        memory usage on other devices &3922MiB & 10544MiB & 15108MiB &OOM \\
        \midrule
        test accuracy &65.69 &65.52 &65.87 &OOM  \\
    \bottomrule
    \end{tabular*}
\\
\bigskip
\centering  batch size 1200 \\[0.2em]
    \begin{tabular*}{1.00\linewidth}{@{\extracolsep{\fill}\,}lccccc}
 	\toprule
        num of devices  &\#8 &\#4 &\#2  &\#1\\
        \midrule 
        memory usage on primary device &22724MiB &21874MiB &OOM &OOM  \\
        memory usage on other devices &2838MiB &14352MiB &OOM & OOM\\
        \midrule
        test accuracy &65.30 &65.36 &OOM &OOM  \\
    \bottomrule
    \end{tabular*}
\\
\bigskip
\centering  batch size 1500 \\[0.2em]
    \begin{tabular*}{1.00\linewidth}{@{\extracolsep{\fill}\,}lccccc}
 	\toprule
        num of devices  &\#8 &\#4 &\#2  &\#1 \\
        \midrule
        memory usage on primary device &22962MiB &OOM &OOM &OOM  \\
        memory usage on other devices &5156MiB & OOM & OOM & OOM \\
        \midrule
        test accuracy &65.34 &OOM &OOM &OOM \\
    \bottomrule
    \end{tabular*}
\end{table*}

For larger batch sizes, more graphics cards are often required, and parallalism is even more necessary.
We obtain the maximum batch size that can be processed for each number of devices based on Table \labelcref{papers100M}, as shown in Table \labelcref{tab4}.
\begin{table}[!]
    \centering
    \bigskip
    \bigskip
    \caption{The maximum batch size that the machine can handle}
    \bigskip
    \begin{tabular*}{1.00\linewidth}{@{\extracolsep{\fill}\,}lccccc}
    \toprule
         &  num of devices  &\#1 &\#2 &\#4 &\#8 \\
    \midrule
         &  max batch size  &\ $<1000$ &\ $1000 \sim 1200$ &\ $1200 \sim 1500$ &\ $>1500$ \\
    \bottomrule
    \bigskip
    \bigskip
    \end{tabular*}
    \label{tab4}
\end{table}



\section{Conclusion}
We gain inspiration from Random Batch Methods and put forward a novel attention mechanism, serving as a useful tool for improving existing graph transformer in terms of time and memory saving. We establish a theoretical model for Transformer encoder layer propagation as a small extra mathematical innovation achievements, based on which we give the convergence property. In the experimental part, higher accuracy of node-level downstream tasks on large graph prove the RBTransformers' expressivity, while lower memory requirement prove its practicability. For future work, the further theoretical research from the view of particle system and application of RBA to other powerful Transformer models is worth doing. Moreover, the verification of time saving ability of RBA is an important experiment in the next research stage.




\acks{We would like to acknowledge Lei Li, which is one of the Random Batch Methods presenters and his instruction on the theoretical part in this paper. }


\newpage

\appendix

\section{Experiment details}

\noindent

We execute our experimental tasks on simple devices, through the method 'Pytorch- DataParallel'. Most experiments for expressivity and practicability are finished on eight Nvidia 4090 devices, minority are finished on six Nvidia A-100 devices and we do not try to repeat these experiments on 4090 devices to prove they can bot be performed on lower-level GPUs.

\section{Brief talk about future work}

\noindent

For future work, we currently have three plans. First, go on doing research on the theoretical property about RBA, based on particle system. The second is to do further applications, promote RBA to other graph transformer models and even to models on natural language processing and protein design, etc. The last is trying more advanced parallel implementation of RBA on multiple devices, as the ability of RBA to save time is not verified up till now because of the communication time between devices. Moreover, the permutation invariance of RBA is worth researching, which is also a problem of several original graph transformer models with which RBA is combined.

\vskip 0.2in
\newpage

\bibliography{AJI}

@misc{kipf2017semisupervisedclassificationgraphconvolutional,
      title={Semi-Supervised Classification with Graph Convolutional Networks}, 
      author={Thomas N. Kipf and Max Welling},
      year={2017},
      eprint={1609.02907},
      archivePrefix={arXiv},
      primaryClass={cs.LG},
      url={https://arxiv.org/abs/1609.02907}, 
}

@article{zhao2021gophormer,
  title={Gophormer: Ego-graph transformer for node classification},
  author={Zhao, Jianan and Li, Chaozhuo and Wen, Qianlong and Wang, Yiqi and Liu, Yuming and Sun, Hao and Xie, Xing and Ye, Yanfang},
  journal={arXiv preprint arXiv:2110.13094},
  year={2021}
}

@article{wu2023difformer,
  title={Difformer: Scalable (graph) transformers induced by energy constrained diffusion},
  author={Wu, Qitian and Yang, Chenxiao and Zhao, Wentao and He, Yixuan and Wipf, David and Yan, Junchi},
  journal={arXiv preprint arXiv:2301.09474},
  year={2023}
}

@article{wu2022nodeformer,
  title={Nodeformer: A scalable graph structure learning transformer for node classification},
  author={Wu, Qitian and Zhao, Wentao and Li, Zenan and Wipf, David P and Yan, Junchi},
  journal={NeurIPS},
  volume={35},
  pages={27387--27401},
  year={2022}
}

@article{wu2024simplifying,
  title={Simplifying and empowering transformers for large-graph representations},
  author={Wu, Qitian and Zhao, Wentao and Yang, Chenxiao and Zhang, Hengrui and Nie, Fan and Jiang, Haitian and Bian, Yatao and Yan, Junchi},
  journal={NeurIPS},
  volume={36},
  year={2024}
}

@article{deng2024polynormer,
  title={Polynormer: Polynomial-expressive graph transformer in linear time},
  author={Deng, Chenhui and Yue, Zichao and Zhang, Zhiru},
  journal={arXiv preprint arXiv:2403.01232},
  year={2024}
}

@article{zhang2020graph,
  title={Graph-bert: Only attention is needed for learning graph representations},
  author={Zhang, Jiawei and Zhang, Haopeng and Xia, Congying and Sun, Li},
  journal={arXiv preprint arXiv:2001.05140},
  year={2020}
}

@article{wu2021representing,
  title={Representing long-range context for graph neural networks with global attention},
  author={Wu, Zhanghao and Jain, Paras and Wright, Matthew and Mirhoseini, Azalia and Gonzalez, Joseph E and Stoica, Ion},
  journal={NeurIPS},
  volume={34},
  pages={13266--13279},
  year={2021}
}

@article{ying2021transformers,
  title={Do transformers really perform badly for graph representation?},
  author={Ying, Chengxuan and Cai, Tianle and Luo, Shengjie and Zheng, Shuxin and Ke, Guolin and He, Di and Shen, Yanming and Liu, Tie-Yan},
  journal={NeurIPS},
  volume={34},
  pages={28877--28888},
  year={2021}
}

@article{rampavsek2022recipe,
  title={Recipe for a general, powerful, scalable graph transformer},
  author={Ramp{\'a}{\v{s}}ek, Ladislav and Galkin, Michael and Dwivedi, Vijay Prakash and Luu, Anh Tuan and Wolf, Guy and Beaini, Dominique},
  journal={NeurIPS},
  volume={35},
  pages={14501--14515},
  year={2022}
}

@inproceedings{ma2023graph,
  title={Graph inductive biases in transformers without message passing},
  author={Ma, Liheng and Lin, Chen and Lim, Derek and Romero-Soriano, Adriana and Dokania, Puneet K and Coates, Mark and Torr, Philip and Lim, Ser-Nam},
  booktitle={ICML},
  pages={23321--23337},
  year={2023},
  organization={PMLR}
}

@article{vaswani2017attention,
  title={Attention is all you need},
  author={Vaswani, A},
  journal={NeurIPS},
  year={2017}
}

@article{yun2019graph,
  title={Graph transformer networks},
  author={Yun, Seongjun and Jeong, Minbyul and Kim, Raehyun and Kang, Jaewoo and Kim, Hyunwoo J},
  journal={NeurIPS},
  volume={32},
  year={2019}
}

@inproceedings{luo2022one,
  title={One transformer can understand both 2d \& 3d molecular data},
  author={Luo, Shengjie and Chen, Tianlang and Xu, Yixian and Zheng, Shuxin and Liu, Tie-Yan and Wang, Liwei and He, Di},
  booktitle={ICLR},
  year={2022}
}

@article{min2022transformer,
  title={Transformer for graphs: An overview from architecture perspective},
  author={Min, Erxue and Chen, Runfa and Bian, Yatao and Xu, Tingyang and Zhao, Kangfei and Huang, Wenbing and Zhao, Peilin and Huang, Junzhou and Ananiadou, Sophia and Rong, Yu},
  journal={arXiv preprint arXiv:2202.08455},
  year={2022}
}

@article{choromanski2020rethinking,
  title={Rethinking attention with performers},
  author={Choromanski, Krzysztof and Likhosherstov, Valerii and Dohan, David and Song, Xingyou and Gane, Andreea and Sarlos, Tamas and Hawkins, Peter and Davis, Jared and Mohiuddin, Afroz and Kaiser, Lukasz and others},
  journal={arXiv preprint arXiv:2009.14794},
  year={2020}
}

@article{zaheer2020big,
  title={Big bird: Transformers for longer sequences},
  author={Zaheer, Manzil and Guruganesh, Guru and Dubey, Kumar Avinava and Ainslie, Joshua and Alberti, Chris and Ontanon, Santiago and Pham, Philip and Ravula, Anirudh and Wang, Qifan and Yang, Li and others},
  journal={NeurIPS},
  volume={33},
  pages={17283--17297},
  year={2020}
}

@article{RBM,
  title={Random batch methods (RBM) for interacting particle systems},
  author={Jin, Shi and Li, Lei and Liu, Jian-Guo},
  journal={Journal of Computational Physics},
  volume={400},
  pages={108877},
  year={2020},
  publisher={Elsevier}
}

@article{geshkovski2023mathematical,
  title={A mathematical perspective on transformers},
  author={Geshkovski, Borjan and Letrouit, Cyril and Polyanskiy, Yury and Rigollet, Philippe},
  journal={arXiv preprint arXiv:2312.10794},
  year={2023}
}

@misc{dosovitskiy2021imageworth16x16words,
      title={An Image is Worth 16x16 Words: Transformers for Image Recognition at Scale}, 
      author={Alexey Dosovitskiy and Lucas Beyer and Alexander Kolesnikov and Dirk Weissenborn and Xiaohua Zhai and Thomas Unterthiner and Mostafa Dehghani and Matthias Minderer and Georg Heigold and Sylvain Gelly and Jakob Uszkoreit and Neil Houlsby},
      year={2021},
      eprint={2010.11929},
      archivePrefix={arXiv},
      primaryClass={cs.CV},
      url={https://arxiv.org/abs/2010.11929}, 
}

@misc{child2019generatinglongsequencessparse,
      title={Generating Long Sequences with Sparse Transformers}, 
      author={Rewon Child and Scott Gray and Alec Radford and Ilya Sutskever},
      year={2019},
      eprint={1904.10509},
      archivePrefix={arXiv},
      primaryClass={cs.LG},
      url={https://arxiv.org/abs/1904.10509}, 
}

@misc{ding2023longnetscalingtransformers1000000000,
      title={LongNet: Scaling Transformers to 1,000,000,000 Tokens}, 
      author={Jiayu Ding and Shuming Ma and Li Dong and Xingxing Zhang and Shaohan Huang and Wenhui Wang and Nanning Zheng and Furu Wei},
      year={2023},
      eprint={2307.02486},
      archivePrefix={arXiv},
      primaryClass={cs.CL},
      url={https://arxiv.org/abs/2307.02486}, 
}

@misc{kitaev2020reformerefficienttransformer,
      title={Reformer: The Efficient Transformer}, 
      author={Nikita Kitaev and Łukasz Kaiser and Anselm Levskaya},
      year={2020},
      eprint={2001.04451},
      archivePrefix={arXiv},
      primaryClass={cs.LG},
      url={https://arxiv.org/abs/2001.04451}, 
}

@inproceedings{NEURIPS2020_c8512d14,
 author = {Zaheer, Manzil and Guruganesh, Guru and Dubey, Kumar Avinava and Ainslie, Joshua and Alberti, Chris and Ontanon, Santiago and Pham, Philip and Ravula, Anirudh and Wang, Qifan and Yang, Li and Ahmed, Amr},
 booktitle = {Advances in Neural Information Processing Systems},
 editor = {H. Larochelle and M. Ranzato and R. Hadsell and M.F. Balcan and H. Lin},
 pages = {17283--17297},
 publisher = {Curran Associates, Inc.},
 title = {Big Bird: Transformers for Longer Sequences},
 url = {https://proceedings.neurips.cc/paper_files/paper/2020/file/c8512d142a2d849725f31a9a7a361ab9-Paper.pdf},
 volume = {33},
 year = {2020}
}

@article{jin2021convergence,
  title={Convergence of the random batch method for interacting particles with disparate species and weights},
  author={Jin, Shi and Li, Lei and Liu, Jian-Guo},
  journal={SIAM Journal on Numerical Analysis},
  volume={59},
  number={2},
  pages={746--768},
  year={2021},
  publisher={SIAM}
}

@misc{openai2024gpt4technicalreport,
      title={GPT-4 Technical Report}, 
      author={OpenAI and Josh Achiam and Steven Adler and Sandhini Agarwal and Lama Ahmad and Ilge Akkaya and others},
      year={2024},
      eprint={2303.08774},
      archivePrefix={arXiv},
      primaryClass={cs.CL},
      url={https://arxiv.org/abs/2303.08774}, 
}

@article{0Large,
  title={Large language models for preventing medication direction errors in online pharmacies},
  author={ Pais, Cristobal  and  Liu, Jianfeng  and  Voigt, Robert  and  Gupta, Vin  and  Wade, Elizabeth  and  Bayati, Mohsen },
  journal={Nature Medicine},
}

@misc{saharia2022photorealistictexttoimagediffusionmodels,
      title={Photorealistic Text-to-Image Diffusion Models with Deep Language Understanding}, 
      author={Chitwan Saharia and William Chan and Saurabh Saxena and Lala Li and Jay Whang and Emily Denton and Seyed Kamyar Seyed Ghasemipour and Burcu Karagol Ayan and S. Sara Mahdavi and Rapha Gontijo Lopes and Tim Salimans and Jonathan Ho and David J Fleet and Mohammad Norouzi},
      year={2022},
      eprint={2205.11487},
      archivePrefix={arXiv},
      primaryClass={cs.CV},
      url={https://arxiv.org/abs/2205.11487}, 
}

@article{gronwall1919note,
  title={Note on the derivatives with respect to a parameter of the solutions of a system of differential equations},
  author={Gronwall, Thomas Hakon},
  journal={Annals of Mathematics},
  volume={20},
  number={4},
  pages={292--296},
  year={1919},
  publisher={JSTOR}
}

@article{fokker1914mittlere,
  title={Die mittlere Energie rotierender elektrischer Dipole im Strahlungsfeld},
  author={Fokker, Adriaan Dani{\"e}l},
  journal={Annalen der Physik},
  volume={348},
  number={5},
  pages={810--820},
  year={1914},
  publisher={WILEY-VCH Verlag Leipzig}
}

@article{planck1917satz,
  title={{\"U}ber einen Satz der statistischen Dynamik und seine Erweiterung in der Quantentheorie},
  author={Planck, VM},
  journal={Sitzungberichte der},
  year={1917}
}

@incollection{davis2011integral,
  title={Integral inequalities for convex functions of operators on martingales},
  author={Davis, Burgess and Song, Renming},
  booktitle={Selected Works of Donald L. Burkholder},
  pages={181--198},
  year={2011},
  publisher={Springer}
}

\end{document}